\documentclass[lettersize,journal, twopage]{IEEEtran}

\usepackage{algorithm}
\usepackage[noend]{algpseudocode}
\algrenewcommand\alglinenumber[1]{\scriptsize #1:}

\usepackage{mathtools}   
\usepackage{amsthm}
\usepackage{amsfonts}
\usepackage{amssymb}
\usepackage{bbm}
\usepackage{dsfont}
\usepackage{stmaryrd}

\newtheorem{theorem}{Theorem}
\newtheorem{lemma}{Lemma}

\newtheorem{corollary}{Corollary}
\newtheorem{assumption}{Assumption}
\theoremstyle{definition}

\theoremstyle{remark}
\newtheorem{remark}{Remark}

\usepackage{graphicx}
\usepackage{array}
\usepackage{float}
\usepackage{multirow}
\usepackage{booktabs}
\usepackage[font=footnotesize]{subcaption}

\usepackage{tikz}
\usetikzlibrary{positioning,arrows.meta,calc,shapes,decorations.pathreplacing}
\usepackage{pgfplots}
\pgfplotsset{compat=1.17}

\usepackage{textcomp}
\usepackage{stfloats}
\usepackage{url}
\usepackage{cite}
\usepackage{enumitem}
\usepackage{mdframed}
\usepackage{multicol}

\usepackage[switch]{lineno}
\usepackage{xcolor}

\usepackage{etoolbox}
\AtBeginEnvironment{figure}{\nolinenumbers}
\AtEndEnvironment{figure}{\linenumbers}
\AtBeginEnvironment{table}{\nolinenumbers}
\AtEndEnvironment{table}{\linenumbers}

\usepackage{hyperref}

\newcommand{\cvar}{\operatorname{CVaR}}
\DeclareMathOperator*{\esssup}{ess\,sup}

\newcommand{\fag}{f_{\alpha,\gamma}}
\newcommand{\bestmean}[1]{\textbf{#1}}
\newcommand{\bestband}[1]{\textcolor{blue}{#1}}

\allowdisplaybreaks
\begin{document}

\title{\textsc{FedeRage}: Provably Convergent Agnostic \\Federated Learning under General Client Drift}

\author{Herlock~(SeyedAbolfazl)~Rahimi and Dionysis~Kalogerias,~\textit{Senior Member, IEEE\vspace{-14pt}}%

\thanks{This work has been supported by the US National Science Foundation under Grants 2242215 and 2431860.}%
\thanks{Preliminary results leading to this paper were presented in part at the \textit{2025 IEEE International Workshop on Computational Advances in Multi-Sensor Adaptive Processing (CAMSAP)}~\cite{RahimiKalogerias2025Agnostic}.}}

\markboth{}{}

\maketitle

\begin{abstract}
Federated learning (FL) enables collaborative model training without sharing raw data, but its performance degrades under non-IID data and stochastic client participation. Remedies built on classical Federated Averaging (FedAvg) typically presuppose that client participation probabilities are known to the server, which is rarely the case in deployed systems. We first discuss and then characterize the optimization problem that \emph{distributionally agnostic} FedAvg actually solves when participation is entirely unknown, possibly highly skewed, and of variable size across rounds: uniform aggregation is shown to minimize a well-defined stochastic objective, weighted by the participation-induced marginal, at a standard $\mathcal{O}(1/\sqrt{T})$ rate for convex and possibly nonsmooth losses. Building on this characterization, we propose \emph{Federated Risk-Averse Averaging} (\textsc{FedeRage}), a risk-averse extension of FedAvg that embeds the \emph{Conditional Value-at-Risk} (CVaR) into the local objective within a natural distributionally robust optimization (DRO) framework. \textsc{FedeRage} implicitly upweights high-loss and infrequently participating clients while adding only a \emph{single scalar per-client}, and admits an $\mathcal{O}(\kappa/\sqrt{T})$ rate in which the factor $\kappa$ is the upper bound on the ``price" of risk aversion. In contrast with aggregation-alignment schemes based on optimal transport, which require the availability distribution as an input, \textsc{FedeRage} remains agnostic to it. Several experiments on three heterogeneous benchmarks indicate consistent improvements over state-of-the-art methods in accuracy, fairness, and convergence speed.
\end{abstract}
\vspace{-4pt}
\begin{IEEEkeywords}
Federated learning, distributionally robust optimization, conditional value-at-risk, risk-averse optimization, client heterogeneity, partial participation, convergence analysis.
\end{IEEEkeywords}

\vspace{-4pt}
\section{Introduction}\label{Introduction}

\IEEEPARstart{F}{ederated} learning (FL) allows a population of distributed clients to collaboratively train a shared model without exchanging raw data, thereby preserving privacy and reducing communication overhead~\cite{Konecny2017FL,FedAvg,Kairouz2021,Geyer2017,Bagdasaryan2018,Reddi2016,Coppola2015}. In synchronous FL, each communication round proceeds as follows: a central server broadcasts the current global model to all clients; each client updates the received model locally on its private dataset (the \emph{local rounds}); and a subset of active clients then transmit their updated parameters back to the server. The server aggregates the received parameters---most commonly through \emph{Federated Averaging} (FedAvg)~\cite{FedAvg,Konecny2015Opt}---and the process repeats until convergence (see Algorithm~\ref{alg:fedavg}). The efficiency, stability, and fairness of this decentralized procedure depend not only on the local data distributions but also on the stochastic availability of the clients.

Performance degradation in FL systems can be traced to two principal sources of heterogeneity. The first is \emph{statistical (data) heterogeneity}: clients often possess non-IID data due to personalized usage patterns, device contexts, or geographical factors~\cite{FedNonIID,LeCun1998,Lin2017}. This induces \emph{client drift}, the divergence of local optimization trajectories, which slows or destabilizes aggregation and degrades generalization; this has been studied extensively in the FL literature~\cite{Li2019,Pillutla_2023,Hitaj2017,SCAFFOLD,FedNova,FedDisco}. The second source is \emph{system and participation heterogeneity}: clients differ in computational capacity, communication bandwidth, and, most critically, \emph{availability}. Unlike classical distributed optimization with deterministic scheduling, client participation in FL is inherently stochastic, constrained by factors such as network connectivity, battery state, and user activity~\cite{Hong2018,Jakovetic2013,Li2014a,ChoWangJoshi2022,FieldGuideFedOpt,SemiCyclicSGD,FlexibleFLParticipation,Ribero2023}. Such \emph{restricted availability} systematically biases aggregation toward frequently active clients while underrepresenting sporadic participants, resulting in slower convergence and degraded fairness~\cite{theodoropoulos2023ram}.
\begin{figure}[t]
    \centering
    \begin{subfigure}[b]{0.51\linewidth}
        \centering
        \includegraphics[width=\linewidth]{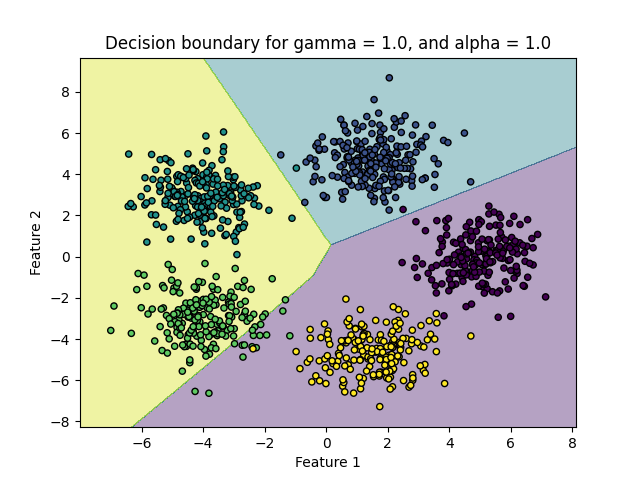}
        \label{fig:subfig1}
    \end{subfigure}%
    \hspace{-12pt}
    \begin{subfigure}[b]{0.51\linewidth}
        \centering
        \includegraphics[width=\linewidth]{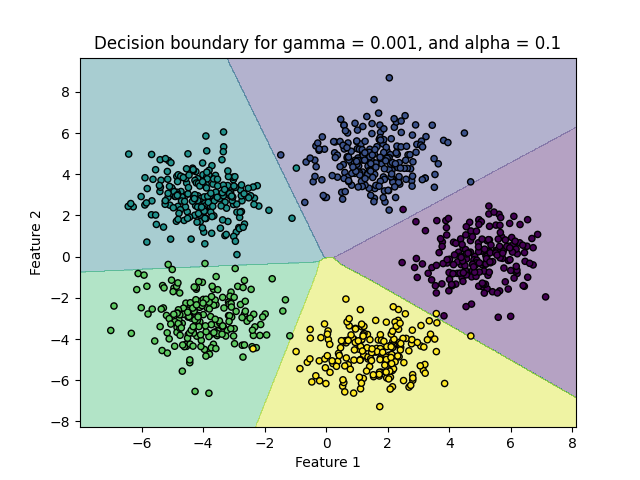}
        \label{fig:subfig2}
    \end{subfigure}
    \vspace{-8pt}
    \begin{subfigure}[b]{0.95\linewidth}
        \centering
        \includegraphics[width=\linewidth]{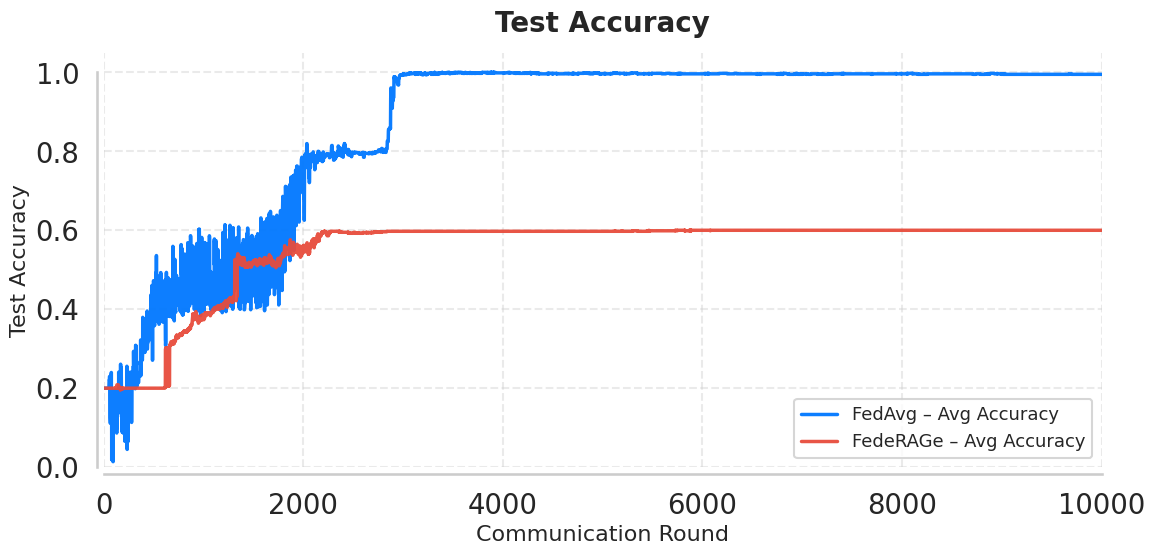}
        \label{fig:subfig3}
    \end{subfigure}
    \caption{{Motivating example: Interaction of statistical heterogeneity and skewed availability.}
    Five clients possess highly non-IID data (clustered by user). Client availability is skewed: three clients participate with probability $0.3$, two with $0.05$.
    \textbf{Top:} learned decision regions after training. \textbf{Bottom:} test accuracy versus communication rounds.
    FedAvg overfits to frequently available clients, degrading accuracy for infrequent ones and slowing convergence.
    \textit{FedeRage} (proposed) mitigates this bias, converging faster and attaining higher final accuracy under the same non-IID and availability regime.}
    \label{fig:motivation}
    \vspace{-14pt}
\end{figure}
A key observation underlying this work is that, beyond being non-uniform and potentially highly skewed, client availability is in practice also \textit{unknown to the server}, who nonetheless must coordinate aggregation in standard FL protocols~\cite{FedNonIID}. This distinction is consequential: most existing analyses either assume known participation probabilities or restrict attention to simplified availability models that do not capture this uncertainty. Following~\cite{theodoropoulos2023ram}, we adopt a \emph{Random Access Model} (RAM) to formalize the phenomenon. In the RAM setting, an independent random mechanism operates at each communication round: it receives all clients' local updates and returns a subset of participating clients according to a fixed but unknown distribution, inaccessible to both the clients and the server. This abstraction captures the stochastic, decentralized nature of participation in practical FL systems while cleanly decoupling availability from optimization dynamics.


The two axes above are tightly coupled: \emph{non-IID data amplify the effect of unequal participation, and participation skew exacerbates non-IID bias}. Although FedAvg converges under suitable smoothness and sampling conditions~\cite{FedAvg,Khaled2019}, its performance deteriorates in regimes exhibiting both data heterogeneity and availability imbalance. Fig.~\ref{fig:motivation} illustrates this interaction in a small FL instance with five user-specific data clusters. Under strongly non-IID data and skewed availability, FedAvg biases its decision boundary toward frequently participating clients, leaving rare users underfit. The framework proposed here, \textsc{FedeRage} (Section~\ref{ProposedApproach}), rebalances the effective weighting of clients in light of stochastic participation, yielding both faster convergence and more equitable performance.

\vspace{-5pt}
\subsection{Prior Work}\label{sec:prior}

Methods that mitigate heterogeneity in FL fall broadly into two families: \emph{client-side optimization}, which modifies the local objectives or updates so as to align local and global descent directions, and \emph{server-side aggregation}, which reweights client updates to reflect heterogeneity in effort, data, or availability. The two are complementary. The method proposed here belongs to the first family, and we benchmark it against the state of the art of that type.

\noindent\emph{Client-side optimization.} \textsc{FedProx}~\cite{FedProx} stabilizes training with a proximal term that keeps local updates close to the global model, and SCAFFOLD~\cite{SCAFFOLD} employs control variates to correct client drift, at the cost of communicating an additional model-sized vector per round. Complementary analyses~\cite{FedNonIID,Khaled2019} establish convergence of local-update methods under random or partial participation. These approaches curb drift, but treat participation as either full or drawn from a \emph{known} distribution.

\noindent\emph{Server-side aggregation.} \textsc{FedOpt} and its variants~\cite{ReddiAdaptiveFO} replace the plain server average with adaptive, momentum-based server updates; FedNova~\cite{FedNova} normalizes contributions by local training effort; and anarchic FL~\cite{AnarchicFL} adapts aggregation to asynchronous participation. FedDisco~\cite{FedDisco} reweights clients according to label imbalance, while availability-aware client selection~\cite{Ribero2023,ChoWangJoshi2022} determines which clients to solicit under intermittent participation. These methods improve robustness, but the reweighting they apply presupposes knowledge of, or the ability to actively probe, the participation statistics.

\noindent\emph{Availability versus importance distributions.} A more recent thread makes explicit the distinction between the \emph{availability} distribution, which governs how often each client reaches the server, and the \emph{importance} distribution, which encodes the weighting the designer intends clients to receive in the global objective, whether for fairness, robustness, or operational reasons. When the two are misaligned, plain FedAvg converges to a \emph{surrogate} objective weighted by the availability-induced marginal instead of to the intended one~\cite{FedNonIID}. Transport-based aggregation removes this mismatch exactly, by casting aggregation as a masked optimal transport problem between the two marginals~\cite{RahimiKalogerias2026FedAVOT}; importance-sampling and client-selection schemes pursue the same objective by other means~\cite{Pillutla_2023,Nguyen2022}. Each of these mechanisms, however, requires the server to know---or to be able to estimate---the availability distribution, which enters the transport or importance-sampling problem as a hard marginal constraint. The regime studied here is the complementary one, in which that distribution is genuinely inaccessible; the response proposed below is a risk-averse local objective that hedges over a neighborhood of the induced distribution instead of aligning to a prescribed target.

\noindent\emph{A note on terminology.} The word \emph{agnostic} carries a different meaning in the \emph{agnostic federated learning} of Mohri \emph{et al.}~\cite{MohriAFL}, where the global model is trained against the worst-case mixture of client distributions, i.e., a minimax problem over a simplex of \emph{target} weights. That formulation is agnostic to the \emph{deployment} distribution, but presumes that the server can solicit and weight every client during training. Throughout this paper, ``agnostic'' instead denotes the \emph{algorithm's} ignorance of the \emph{participation} law and that the server never observes, estimates, or uses $\mathcal{R}$, and simply averages whatever it receives. 

\vspace{-5.1pt}
\subsection{Motivation for \textsc{FedeRage}}
\vspace{-1pt}
Most existing techniques treat data heterogeneity and stochastic participation as independent issues, whereas the two are coupled in practice: rarely available clients often hold rare or distinctive data. To address this joint difficulty, we introduce \emph{Federated Risk-Averse Averaging} (\textsc{FedeRage}), an FL algorithm built on a risk-averse framework that embeds the \emph{Conditional Value-at-Risk} (CVaR) into the local objective. Coupling the participation distribution with a mean--CVaR formulation causes \textsc{FedeRage} to upweight high-loss and infrequently selected clients implicitly, conferring robustness to statistical and availability heterogeneity within a single DRO framework. The algorithm (Algorithm~\ref{alg:federage}) retains the structure and per-round cost of FedAvg while admitting a convergence guarantee and improved operational fairness. A CVaR-based scheme of similar flavor was proposed in~\cite{theodoropoulos2023ram} and likewise rebalances updates toward underrepresented clients, but under a more restrictive availability model and without convergence guarantees. Related Wasserstein- and CVaR-based DRO formulations~\cite{Pillutla_2023,Nguyen2022,Yu2023,Deng2020,Hong2021,Shi2023} improve robustness and fairness, but generally assume known participation probabilities or incur high computational cost. Stochastic first-order methods for CVaR and related DRO objectives are well understood in the \emph{centralized} setting~\cite{LevyCarmonDuchiSidford2020}; the difficulty specific to the present setting is that the nominal distribution is itself generated by an unobserved participation mechanism, so neither the reference measure nor its likelihood ratios are available to the algorithm.

\vspace{1pt}
\noindent\textbf{\textit{Contributions---}}Those are as follows:
\vspace{-1pt}
\begin{itemize}[leftmargin=.35cm]
    \item \emph{General participation model.} We formulate a stochastic model of client availability that extends the RAM of~\cite{theodoropoulos2023ram} to \emph{variable-size} multi-client participation per round, inducing non-uniform and hidden sampling probabilities consistent with the behavior of deployed FL systems.
    \item \emph{Convergence of agnostic FedAvg.} We identify the objective that agnostic averaging actually minimizes under this model, and later establish an optimal $\mathcal{O}(1/\sqrt{T})$ rate for convex, possibly nonsmooth losses under an unknown, non-uniform, variable-size participation law.
    \item \emph{Risk-aware formulation.} We propose \textsc{FedeRage}, a CVaR-based modification of the local objective that treats data and availability heterogeneity within a single DRO framework, at the cost of a \emph{single additional scalar} of per-client state.
    \item \emph{Convergence of \textsc{FedeRage}.} We prove that the agnostic FedAvg analysis transfers to the mean--CVaR objective, with the rate inflated by the explicit factor $\kappa(\alpha,\gamma)=(1-\gamma)+\gamma/\alpha$ where $\gamma \in [0,1]$ interpolates between mean ($\gamma=0$) and CVaR ($\gamma=1$), 
and $\alpha \in (0,1]$ is the CVaR quantile level.
    \item \emph{Empirical validation.} We show that \textsc{FedeRage} improves on state-of-the-art algorithms in both heterogeneous and restricted-availability regimes, in convergence rate, accuracy, and per-client fairness, with a margin that widens as the learning task becomes harder.
\end{itemize}

\noindent\textbf{Paper organization.}
Section~\ref{ProblemDefinition} introduces the preliminaries and the problem setup. Section~\ref{ProposedApproach} presents the proposed \textsc{FedeRage} framework and its primal and dual interpretations. Section~\ref{Convergence} develops the convergence analysis, with all proofs given in place. Section~\ref{sec:experiments} reports the experiments and implementation details. Section~\ref{Conclusion} concludes.

\begin{algorithm}[t]
\caption{Agnostic Federated Averaging (FedAvg)}
\label{alg:fedavg}
\begin{algorithmic}[1]
\State \textbf{Initialize:} $\theta_i^{0}=\mathbf{0}$ for $i\in[N]$; horizon $T$; local steps $H$; step size $\eta_\theta>0$; constraint set $\mathcal{C}\subseteq\Theta$.
\For{$t = 1,2,\dots, TH$}
  \If{$t \bmod H = 0$} \Comment{\emph{\textbf{Global communication}}}
    \State Clients transmit local parameters through RAM.
    \State Server receives $S^t \subseteq [N]$.
    \State Server \textbf{aggregates}:
    $\displaystyle{\hat{\theta}^t = \frac{1}{|S^t|} \sum_{i \in S^t} \theta_i^{t-1}}$.
    \State Server \textbf{broadcasts} $\hat{\theta}^t$ and sets $\theta_i^t = \hat{\theta}^t$, $\forall i \in [N]$.
  \Else \Comment{\emph{\textbf{Local updates}}}
    \State Each client performs  stochastic subgradient step:
    \State $\theta_i^t
      = \Pi_{\mathcal{C}}\!\left(
        \theta_i^{t-1}
        - \eta_\theta \nabla_\theta f_i(\theta_i^{t-1}; \xi_i^t)
      \right),\ \forall i \in [N]$.
  \EndIf
\EndFor
\State \textbf{Return:} $\dfrac{1}{T} \sum_{\tau=1}^{T} \hat{\theta}^{\tau H}$.
\end{algorithmic}
\end{algorithm}

\section{Problem Definition}\label{ProblemDefinition}
We tacitly consider a distributed multi-class classification task with feature space \(\mathfrak{X} \subset \mathbb{R}^d \), target classes \( \mathcal{C}=\{1,..., C\} \), parameter space $ \Theta\subseteq\mathbb{R}^{d'}$, a given loss function $\ell:\mathcal{C}\times\mathcal{C}\rightarrow\mathbb{R}_+$, and a parametric model (i.e., a learning representation) $m:\mathfrak{X}\times\Theta\rightarrow \mathcal{C}$ \footnote{While we work in the classification setting, our considerations work more generally, e.g,  for regression tasks as well.}. Unlike traditional (centralized) learning, in \textit{Federated Learning (FL)}, data are distributed among $N$ clients (or agents, or users), each with their own private dataset $D_i = \{(X_i^1,Y_i^{1}), \dots, (X_i^{n_i},Y_i^{n_i})\}$ and local parameter $\theta_i \in \Theta$, as well as their own local expected loss \begin{align}
    \hspace{-2pt}f(\theta; D_i) & = \mathbb{E}_{ \mathcal{D}_i}[\ell(m(X, \theta), Y)] = \frac{1}{n_i}\sum_{j=1}^{n_i}\ell(m(X^j_i, \theta), Y^j_i),
\end{align} where $\mathcal{D}_i$ denotes the corresponding empirical distribution induced by the local dataset $D_i$, for each client $i$. Throughout the paper we will use the following conventions as well: $f_i(\cdot)=f(\cdot, D_i), f(\cdot) = \sum_{i=1}^n p_if_i(\cdot)$.

As mentioned earlier, due to different potential scenarios, users transmit their parameters with different frequencies to the server. To formally model this phenomenon, we consider a Random Access Model (RAM) (initially formalized in \cite{theodoropoulos2023ram} and substantially extended herein), that selects a subset of users at each round and transmits their parameters to the server with a fixed but unknown probability distribution (see Section \ref{CanonicalRAM}).

The generic iterated agnostic coordination scheme under which we consider the FL problem throughout this paper is described as follows: At each (global) communication round, all $N$ users (attempt to) transmit their local parameter vectors to the server (passing through the RAM), and the RAM selects $M$ out of $N$ users based on a certain but general probabilistic structure {(see Section \ref{CanonicalRAM} for details)}. Then, the server aggregates the received parameter vectors ($M$ in number) with an aggregation policy and broadcasts the aggregation result to all $N$ clients (for them to subsequently process their local parameters). 
Under this setting, a conventional goal of the server (as in standard FL) is to find a global parameter $\theta^*$ that performs optimally on the weighted loss of all users, i.e., to solve the problem
\begin{align}\label{EmpricialRiskNeutralFL}
    \inf_{\theta \in \Theta}  \hspace{-1bp} 
    \Bigg\{ \hspace{-2bp}
    \sum_{i=1}^{N}p_i f(\theta; D_i)
    \hspace{-2bp}=\hspace{-1bp}
    \mathbb{E}_{I\sim\mathcal{P}}\big[f(\theta; D_I)\big]
    \hspace{-2bp}\Bigg\},
\end{align}
where the weights $\{p_i\}^N_{i=1}$ (defining $\mathcal{P}$) constitute appropriate \textit{posterior} client participation probabilities \textit{induced} by the stochastic mechanism implemented by the RAM, reflecting the \textit{systemic randomness} of user availability (see Section \ref{CanonicalRAM}).

Problem \ref{EmpricialRiskNeutralFL} could (hopefully) be tackled by means of the (\textit{agnostic}) \textit{Federated Average Algorithm (FedAvg)} (see Algorithm \ref{alg:fedavg}), consisting of two alternating stages, also called \textit{rounds}: \textit{local update} rounds, and \textit{global communication} rounds, closely resembling the generic coordination scheme outlined above. More specifically, during each local update round, all users optimize their respective parameters via mini-batch Stochastic Gradient Descent (SGD) run for $H$ consecutive iterations\footnote{For simplicity, we assume that all users employ the same number of local iterations $H$; this  assumption can be relaxed in both analysis and practice.}. During each global communication round, the optimized parameters of all $N$ users are \textit{transmitted towards the server, first passing through the RAM} (which may also be thought of as an multi-erasure channel; see Section \ref{CanonicalRAM}). Then, the server aggregates the received parameters of the resulting $M\le N$ users surviving the RAM by taking a simple (agnostic) average and subsequently broadcasts the result to all users, so that they can (re)-compute their local updates initialized at the new global parameter.

At this point, it is worth mentioning that whether (agnostic) FedAvg (Algorithm \ref{alg:fedavg}) is in fact an appropriate method to solve Problem \ref{EmpricialRiskNeutralFL} remains an open question in the current literature. The agnostic nature of the algorithm ---embodied in its use of uniform averaging over the $M$ received parameters from the clients--- may not necessarily account for the client availability distribution $\mathcal{P}$ induced by the RAM. Since this distribution is hidden from the server, the naive (though straightforward) averaging step in Line 4 of Algorithm \ref{alg:fedavg} may misrepresent the actual statistical significance or ``representativeness" of the contribution of each client to the global model.

This issue raises fundamental concerns about the compatibility of FedAvg with the objective in Problem \ref{EmpricialRiskNeutralFL}, particularly when the sampling process exhibits persistent heterogeneity. A rigorous treatment of this mismatch, including a detailed convergence analysis of FedAvg under a canonical probabilistic RAM structure and for a naturally chosen $\mathcal{P}$ is developed in Section \ref{CanonicalRAM} and later in Section \ref{Convergence}.
\vspace{-6pt}
\subsection{An Alternative Representation of the FL Problem}

Suppose that all clients adopt a common \emph{mini-batch size} $b$. Each mini-batch is a set $\xi_i^j$, with $i \in [N]$ and $j \in \{1, \dots, N^b_i\}$, where $N^b_i:=\binom{n_i}{b}$; let $D^b_i= \bigcup_{j=1}^{N^b_i}\{\xi_i^j\}$ be the collection of all mini-batches of size $b$. Writing the average loss over a mini-batch $\xi$ as $f(\theta; \xi) = \frac{1}{b} \sum_{(X,Y) \in \xi} \ell(m(X, \theta), Y)$ and the average over all mini-batches as $f(\theta; D_i^b) = (1/N^b_i)\sum_{j=1}^{N^b_i}f(\theta; \xi_i^j)$, a double-counting argument yields
\begin{equation}\label{equation:equivalence}
\begin{aligned}
    f(\theta; D_i^b)
    &=\frac{1}{N^b_i\,b}\sum_{j=1}^{N^b_i}\sum_{(X,Y) \in \xi_i^j} \ell(m(X, \theta), Y) \\
    &= \frac{1}{N^b_i\,b} \sum_{(X,Y)\in D_i} \binom{n_i-1}{b-1}\, \ell(m(X, \theta), Y) \\
    &= \frac{1}{n_i}\sum_{(X,Y)\in D_i}\ell(m(X, \theta), Y)
    =f(\theta; D_i),
\end{aligned}
\end{equation}
where we used $\binom{n_i-1}{b-1}/\big(N_i^b\, b\big)=1/n_i$. Hence, for every mini-batch size $b$, Problem~\eqref{EmpricialRiskNeutralFL} can be re-expressed as
\begin{equation}\label{eq:EmpiricalRiskNeturalBatchFL}
    \inf_\theta\Bigg\{\hspace{-1bp}\sum_{i=1}^N p_i f(\theta;D_i^b) \hspace{-2bp}=\hspace{-1bp}
    \mathbb{E}_{I\sim\mathcal{P}}\big[f(\theta; D_I^b)\big]
    \hspace{-2bp}\Bigg\},
\end{equation}
or, more compactly,
\begin{equation}\label{BatchFL}
    \boxed{\inf_\theta \,\mathbb{E}_{\xi\sim \mathcal{Q}^b}\left[f(\theta; \xi)\right],}
\end{equation}
where $\mathcal{Q}^b$ denotes the \emph{mixture distribution} $\mathcal{Q}^b=\sum_{i=1}^N p_i\,\mathcal{Q}_i^b$, with $\mathcal{Q}_i^b$ the uniform distribution over $D_i^b$ and $\mathcal{P}$ the participation distribution. Under $\mathcal{Q}^b$, the probability that $\xi$ equals $\xi_i^j$ is $p_i/N^b_i$ for $i \in [N]$ and $j \in \{1, \dots, N^b_i\}$, and zero otherwise. We write $\xi_i$ for the random variable taking values $\xi_i^j$, $j\in[N_i^b]$, under $\mathcal{Q}_i^b$.

Problems~\eqref{EmpricialRiskNeutralFL}, \eqref{eq:EmpiricalRiskNeturalBatchFL}, and~\eqref{BatchFL} share the same objective. In~\eqref{eq:EmpiricalRiskNeturalBatchFL} and~\eqref{BatchFL}, however, the mini-batch size $b$ appears as part of the problem \emph{formulation} instead of as a hyperparameter of a particular solver (e.g. SGD). We therefore refer to $b$ as the \emph{model mini-batch size}, to distinguish it from the \emph{algorithmic mini-batch size} used within an SGD implementation. Operationally, solving~\eqref{EmpricialRiskNeutralFL} by SGD with algorithmic mini-batch $b$ is equivalent to solving~\eqref{eq:EmpiricalRiskNeturalBatchFL} or~\eqref{BatchFL} by the same scheme with algorithmic mini-batch $1$.

\begin{remark}
While the minibatch reformulation \eqref{BatchFL} does not seem to offer an apparent operational advantage concerning standard FL, the identification of the minibatch distribution $\mathcal{Q}^b$ will be key to the development of our proposed approach for simultaneously dealing for the issues of data heterogeneity and limited client availability (i.e., general client drifts), following the general paradigm of \textit{distributionally robust optimization (DRO)}, as discussed in later Sections \ref{DRFL} and \ref{ProposedApproach}.
\end{remark}

\vspace{-6pt}
\subsection{Canonical Probabilistic RAM}\label{CanonicalRAM}

FL is traditionally analyzed under the assumption that either all clients (full participation) or a fixed fraction of them (partial participation) participate in a round, the shortfall being attributed to device unavailability, battery constraints, or network failures~\cite{FlexibleFLParticipation,SemiCyclicSGD,FieldGuideFedOpt}. In the simplest case the server is assumed to know the participation probabilities and to exploit them during aggregation, which, as discussed above, is impractical. To model participation without such knowledge, we adopt and extend the \emph{Random Access Model} (RAM) of~\cite{theodoropoulos2023ram}: an external, independent entity selects a subset of clients $S^t \subseteq [N]$ at each communication round $t$.

\vspace{4pt}
\noindent\textbf{Canonical fixed-size RAM.}
Let $M \in \{1, \ldots, N\}$ be fixed. At each round $t$, the RAM selects $M$ clients \emph{without replacement} according to a stationary, possibly non-uniform selection prior $\mathcal{R}$; that is, $S^t$ is drawn so that client $i$ satisfies $\{i \in S^t\}$ with marginal probability $\mathbb{P}(i \in S^t)$ determined by $\mathcal{R}$.\footnote{If $M=N$, then regardless of $\mathcal{R}$ the marginal is uniform, $\mathbb{P}(i \in S^t)=1$, and the induced weights below satisfy $p_i=1/N$.} Since all $M$ selected clients reach the server and are weighted uniformly, the weight effectively borne by client $i$'s parameter is proportional to $\mathbb{P}(i\in S^t)/M$. Accordingly, the natural weight on client $i$'s empirical risk $f_i(\theta)$ in the global objective is its \emph{probability of survival} through the two-stage process of RAM selection followed by uniform averaging, i.e., $\mathbb{P}(i \in S^t)/M$: a client's influence on the global model is proportional to its frequency of participation.
These weights form a distribution over $[N]$:
\begin{align}
    \sum_{i=1}^{N} \mathbb{P}(i \in S^t)
    &= \sum_{i=1}^{N} \mathbb{E}\big[\mathds{1}\{i \in S^t\}\big]
    = \mathbb{E}\Bigg[\sum_{i=1}^{N} \mathds{1}\{i \in S^t\}\Bigg]\nonumber \\
    &= \mathbb{E}\big[|S^t|\big] = M.
\end{align}
We may then plausibly take
\begin{equation}\label{canonicalP}
    p_i := \frac{\mathbb{P}(i \in S^t)}{M} = \frac{1}{M}\sum_{S}\mathbb{P}(S^t=S)\,\mathds{1}\{i\in S\},\,\forall i \in [N].
\end{equation}
Since the selection dynamics are stationary, the weights $p_i$ do not depend on $t$. The distribution $\mathcal{P} := (p_1, \dots, p_N)$ is shaped by the selection budget $M$: for $M = N$ (full participation), $p_i = 1/N$ for all $i$ regardless of $\mathcal{R}$; for $M = 1$, $\mathcal{P}$ coincides with the RAM's singleton sampling distribution.

\vspace{4pt}
\noindent\textbf{General case: variable-size RAM.}
When $|S^t|$ is not fixed, the normalized weights remain well defined through a convex combination over subset sizes:
\begin{equation}\label{eq:general_p}
    \boxed{\,p_i = \sum_{S \subseteq [N]} \frac{\mathcal{R}(S)}{|S|}\, \mathds{1}\{i \in S\}\,,} \quad \text{for all } i \in [N],
\end{equation}
where $\mathcal{R}(S)$ is the probability of selecting the subset $S$ and, without loss of generality, $\mathcal{R}(\emptyset)=0$. Indeed, conditioning on the selected subset and using that a uniformly averaged subset of size $|S|$ assigns weight $1/|S|$ to each of its members,
\[
p_i = \sum_{S\subseteq[N]}
\underbrace{\tfrac{1}{|S|}\,\mathds{1}\{i \in S\}}_{\mathbb{P}(i \text{ weighted}\mid S^t=S)}\,
\underbrace{\mathcal{R}(S)}_{\mathbb{P}(S^t=S)},
\]
which defines a valid probability distribution over $[N]$ and accounts for the expected relative contribution of each client across all sampled subsets. Expression~\eqref{eq:general_p} contains the fixed-size formula~\eqref{canonicalP} as the special case $\mathcal{R}(\{S:|S|=M\})=1$, and permits unrestricted subset selection. The convergence guarantees of Section~\ref{Convergence} hold under~\eqref{eq:general_p} verbatim, since the participation model enters the analysis only through Lemma~\ref{lemma:s2m}, which is proved directly from~\eqref{eq:general_p}. For ease of comparison with state-of-the-art algorithms that assume fixed-size participation, we focus henceforth on the case in which exactly $M$ clients are selected per round.

\begin{figure}[!t]
    \centering
    \begin{tikzpicture}[node distance=1.2cm and 1.6cm, >=stealth, every node/.style={font=\footnotesize}]
    \node[draw, ellipse, align=center] (params) {Input\\$\theta_1^t,\dots,\theta_N^t$};
    \node[draw, rectangle, minimum width=2cm, minimum height=1cm, right=0.9cm of params, align=center] (RAM) {Stationary\\Erasure Channel};
    \node[draw, ellipse, right=0.9cm of RAM, align=center] (output) {Output\\$\theta_{i_1}^t, \dots, \theta_{i_M}^t$};
    \draw[->, thick] (params) -- (RAM);
    \draw[->, thick] (RAM) -- (output);
    \node[below=0.15cm of RAM] {Survival probability $\propto p_i = \tfrac{\mathbb{P}(i \in S^t)}{M}$};
    \end{tikzpicture}
    \caption{Canonical RAM structure: the $N$ inputs $\theta_1, \dots, \theta_N$ enter a stationary $(N,M)$-ary erasure channel (the RAM), producing $M$ surviving outputs (a random subset of the inputs), where the survival probability of each $\theta_i$ is proportional to $p_i$.}
    \label{fig:ram_erasure_channel}
    \vspace{-12pt}
\end{figure}

\vspace{-6pt}
\subsection{Distributionally Robust Federated Learning (DRFL)}\label{DRFL}

The ineffectiveness of agnostic FedAvg under client drift may be understood as a failure to generalize to a statistically balanced treatment of clients and their datasets. Even when FedAvg solves its own objective optimally, that objective may be biased toward particular clients or sub-datasets and far from the intended learning outcome. This is a \emph{distribution shift in the training objective}, caused by client drift arising from the RAM, from data inhomogeneity, or from both: clients' loss contributions are weighted unequally, compromising fairness and degrading performance for underrepresented clients. This is precisely the situation of~\eqref{EmpricialRiskNeutralFL} (equivalently~\eqref{BatchFL}) under the induced weighting~\eqref{canonicalP}.

Accordingly, we adopt a \emph{distributionally robust federated learning} (DRFL) formulation. Instead of optimizing against the fixed distribution $\mathcal{Q}^b$, we minimize the worst-case expected loss over a distributional neighborhood of $\mathcal{Q}^b$:
\begin{equation}\label{DRO}
\boxed{\inf_\theta\sup_{\mathcal{Q}\in\mathds{U}_\epsilon(\mathcal{Q}^b)}\mathbb{E}_{\xi\sim \mathcal{Q}}[f(\theta; \xi)],}
\end{equation}
where the \emph{ambiguity set} $\mathds{U}_\epsilon(\mathcal{Q}^b) = \{\mathcal{Q} : \mathrm{d}(\mathcal{Q}, \mathcal{Q}^b)\leq\epsilon\}$ is a distributional ``ball" of radius $\epsilon$ centered at $\mathcal{Q}^b$ in a distributional distance or divergence $\mathrm{d}$, both of which are specified in Section~\ref{ProposedApproach}. Whereas~\eqref{BatchFL} weights individual data points by the availability-induced distribution, Problem~\eqref{DRO} optimizes uniformly over a set of nearby, and potentially more desirable, distributions.

By construction, every $\mathcal{Q}\in\mathds{U}_\epsilon(\mathcal{Q}^b)$ incurs an average loss no larger than the adversarial loss $\sup_{\mathcal{Q}\in\mathds{U}_\epsilon(\mathcal{Q}^b)}\mathbb{E}_{\xi\sim \mathcal{Q}}[f(\theta; \xi)]$; minimizing the latter therefore yields a parameter that performs well for all $\mathcal{Q}\in\mathds{U}_\epsilon(\mathcal{Q}^b)$ simultaneously. As $\epsilon$ grows, the set $\mathds{U}_\epsilon(\mathcal{Q}^b)$ enlarges, admitting more pessimistic distributions and, at the same time, increasing the likelihood that it contains distributions more desirable than $\mathcal{Q}^b$; a performance trade-off is thus implicit in the choice of $\epsilon$. In the limit $\epsilon \rightarrow \infty$ the adversarial loss approaches $\sup_\xi f(\theta ; \xi)$, whereas $\epsilon=0$ recovers the nominal loss $\mathbb{E}_{\xi\sim \mathcal{Q}^b}[f(\theta; \xi)]$. Problem~\eqref{DRO} therefore spans the range between classical and maximally averse FL, providing a principled means of trading average-case behavior against robustness to distribution shift induced by general client drift.


DRFL in the present setting poses an additional natural difficulty: $\mathcal{Q}^b$ is unknown and cannot be manipulated, being implied by random process modeled by RAM and modeled through the RAM. We resolve this by exploiting the convex (Fenchel) duality between distributionally robust functionals and coherent risk measures, developed next.

\vspace{-6pt}
\section{Proposed Approach: \textsc{FedeRage}}\label{ProposedApproach}

\begin{algorithm}[t]
\caption{Federated Risk-Averse Averaging (\textsc{FedeRage})}
\label{alg:federage}
\begin{algorithmic}[1]
\State \textbf{Initialize:} $\theta_i^{0} = \mathbf{0}$, $\beta_i^{0} = 0$ for all $i \in [N]$; $T, H, \eta_\theta, \eta_\beta > 0$; 
\State \hspace{44.5pt}$\alpha \in (0,1]$, $\gamma \in [0,1]$; compact sets $\mathcal{C}, \mathcal{B}$.
\For{$t = 1, \ldots, TH$}
  \If{$t \bmod H = 0$} \Comment{\textbf{\textit{Global communication}}}
    \State Clients transmit local parameters through RAM.
    \State Server samples active set $S^{t} \sim \mathcal{R}$.
    \State Server \textbf{aggregates}: $\displaystyle{\begin{bmatrix} \hat{\theta}^{t} \\ \hat{\beta}^{t} \end{bmatrix} = \frac{1}{|S^t|}\sum_{i \in S^{t}} \begin{bmatrix} \theta_i^{t-1} \\ \beta_i^{t-1} \end{bmatrix}}$; 
    \State Server \textbf{broadcasts} to all clients.
  \Else \Comment{\textbf{\textit{Local updates}}}
    \State Each client draws $\xi_i^t \sim \mathcal{Q}_i^b$;
    \vspace{2pt}
    \State Computes $w_i^{t} = (1-\gamma) + \dfrac{\gamma}{\alpha}\,\mathds{1}\{f(\theta_i^{t-1}; \xi_i^t) \ge \beta_i^{t-1}\}$;
    \vspace{-7pt}
    \State Performs stochastic subgradient step:
    \vspace{2pt}
    \State $\begin{bmatrix} \theta_i^{t} \\ \beta_i^{t} \end{bmatrix} = \Pi_{\mathcal{C} \times \mathcal{B}}\left\{\begin{bmatrix} \theta_i^{t-1} \\ \beta_i^{t-1} \end{bmatrix} - \begin{bmatrix} \eta_\theta w_i^{t} \nabla_\theta f_i(\theta_i^{t-1}; \xi_i^t) \\ \eta_\beta(1 - w_i^{t}) \end{bmatrix}\right\}$.
  \EndIf
\EndFor
\State \textbf{Return:} $\frac{1}{T} \sum_{\tau=1}^{T} \hat{\theta}^{\tau H}, \frac{1}{T} \sum_{\tau=1}^{T} \hat{\beta}^{\tau H}$.
\end{algorithmic}
\end{algorithm}

\subsection{DRO via the Conditional Value-at-Risk}

Consider a random element $\xi$ with base distribution $\mathcal{Q}'$, and let $\xi \mapsto f(\xi) \in \mathbb{R}$ be integrable with respect to $\mathcal{Q}'$, so that $f(\xi)$ may be regarded as a random cost. The \emph{Conditional Value-at-Risk} (CVaR) of $f(\xi)$ at level $\alpha$ may be defined as~\cite{RockafellarUryasev2000,shapiro2009lectures}
\begin{equation}\label{CVARDef}
\cvar^\alpha_{\xi\sim \mathcal{Q}'}[f(\xi)] \triangleq \inf_{\beta \in \mathbb{R}} \left[\beta + \frac{1}{\alpha}\, \mathbb{E}_{\xi \sim \mathcal{Q}'}\big[(f(\xi) - \beta)_+\big]\right],
\end{equation}
where $(\cdot)_+=\max\{\cdot, 0\}$ and $\alpha \in (0,1]$ is the \emph{risk or confidence level}. Equivalently, CVaR is the average of the loss over its worst (upper) $\alpha$-tail reading
\begin{equation}\label{RiskAverseCVaR}
\cvar^\alpha[f(\xi)] = \frac{1}{\alpha}\int_{1-\alpha}^{1}\mathrm{VaR}_{u}\big[f(\xi)\big]\,\mathrm{d}u ,
\end{equation}
where $\mathrm{VaR}_u$ denotes the $u$-quantile. For a \emph{continuous} distribution this reduces to a conditional-tail-expectation form
\[
\cvar^\alpha[f(\xi)] = \mathbb{E}\big[f(\xi) \mid f(\xi) \geq \beta^*_\alpha\big], \quad \mathbb{P}\big(f(\xi) \geq \beta^*_\alpha\big) = \alpha,
\]
with $\beta^*_\alpha$ an optimal $\beta$ in~\eqref{CVARDef}. For general (in particular, discrete) distributions this identity may fail on account of an atom at the quantile, whereas~\eqref{RiskAverseCVaR} always holds.

CVaR is a coherent risk measure---convex, monotone, translation equivariant, and positively homogeneous~\cite{shapiro2009lectures}---and admits the dual representation widely used in DRO \begin{equation}\label{DualCVARRepresentation}
\cvar^\alpha[f(\xi)] = \sup_{\mathcal{Q} \in \mathds{U}_{\alpha}} \mathbb{E}_{\mathcal{Q}}[f(\xi)],
\end{equation}
where the ambiguity set $\mathds{U}_{\alpha}$ is
\begin{align*}
\mathds{U}_{\alpha} &= \left\{ \mathcal{Q} \ \middle|\ \frac{\mathrm{d}\mathcal{Q}}{\mathrm{d}\mathcal{Q}'} \in \left[0, \tfrac{1}{\alpha}\right] \text{ a.e.-}\mathcal{Q}' \right\},
\end{align*}
and $\mathcal{Q}'$ is the nominal (reference) distribution. CVaR is thus a worst-case expectation over a likelihood-ratio (R\'enyi-type) divergence ball of radius $\log(1/\alpha)$, and instantiates~\eqref{DRO} with $\epsilon=\log(1/\alpha)$. Varying $\alpha$ interpolates continuously between the expectation and the essential supremum:
\[
\cvar^{1}[f(\xi)] = \mathbb{E}[f(\xi)], \quad \lim_{\alpha \rightarrow 0^+}\cvar^\alpha[f(\xi)] = \esssup f(\xi).
\]
As $\alpha\to 0^+$, the likelihood-ratio bound $1/\alpha$ diverges and the ambiguity set expands toward the worst case, whereas $\alpha=1$ forces $\mathrm{d}\mathcal{Q}/\mathrm{d}\mathcal{Q}'=1$ a.e.\ and recovers the nominal expectation. This renders CVaR a natural risk-sensitive functional for robust optimization.

\vspace{-6pt}
\subsection{DRFL via the CVaR: \textsc{FedeRage}}

To instantiate~\eqref{DRO} in a manner computable without access to $\mathcal{Q}^b$, we combine the CVaR under the RAM-induced distribution with the risk-neutral objective, replacing~\eqref{BatchFL} by the mean--CVaR problem
\begin{equation}\label{RiskAverseFL}
    \boxed{\inf_{\theta \in \Theta}\Big[ (1-\gamma)\, \mathbb{E}_{\xi \sim \mathcal{Q}^b}[f(\theta; \xi)] + \gamma\, \cvar^\alpha_{\xi \sim \mathcal{Q}^b}[f(\theta; \xi)]\Big],}
\end{equation}
with $\alpha\in(0,1]$ and $\gamma \in [0,1]$ tunable hyperparameters. Using the variational form~\eqref{CVARDef}, Problem~\eqref{RiskAverseFL} is equivalent to the joint problem
\begin{align*}
    \inf_{\theta \in \Theta,\, \beta \in \mathbb{R}}\ \fag(\theta,\beta),
\end{align*}
where
\begin{align}\label{eq:fag}\hspace{-6bp}\fag(\theta,\beta) \hspace{-2bp}:=\hspace{-1bp}\mathbb{E}\bigg[(1 \hspace{-1bp}-\hspace{-1bp}\gamma)f(\theta; \xi) \hspace{-1bp}+\hspace{-1bp} \gamma \Big[\beta \hspace{-1bp}+\hspace{-1bp} \tfrac{1}{\alpha}\big(f(\theta; \xi) \hspace{-1bp}-\hspace{-1bp} \beta\big)_+\Big]\bigg].\hspace{-6bp}
\end{align}
The expectation over $\mathcal{Q}^b$ is internal to the definition of $\fag$; consequently, $\fag(\theta,\beta)$ denotes a deterministic quantity and no outer expectation is written anywhere below.

Relative to agnostic FedAvg (Algorithm~\ref{alg:fedavg}), the only modification required by~\eqref{eq:fag} is the introduction of a single scalar $\beta \in \mathbb{R}$, maintained locally alongside the model parameters, so that Algorithm~\ref{alg:federage} retains the two-phase structure of its risk-neutral counterpart. Both local updates are driven by the same scalar risk weight $w_i^t=(1-\gamma)+\frac{\gamma}{\alpha}\mathds{1}\{f_i\ge\beta\}$ of Line~8: the update direction in $\theta$ is $w_i^t\nabla_\theta f_i$, whereas the subgradient with respect to $\beta$ is selected as $\gamma\big[1-\frac{1}{\alpha}\mathds{1}\{f_i\ge\beta\}\big]=1-w_i^t$, and in particular involves no derivative of $f_i$ in $\beta$. We project $\beta$ onto a compact interval $\mathcal{B}$, which is done without loss of generality, since the optimizer $\beta^\star$ of~\eqref{CVARDef} is a finite value-at-risk lying in the range of the loss. Each local step uses a single model mini-batch $\xi_i^t\sim\mathcal{Q}_i^b$, i.e., \textit{algorithmic} mini-batch size one, which permits tight control of the upper tail of the loss.

\vspace{4pt}
\noindent\textit{Comparison relative to SCAFFOLD~\cite{SCAFFOLD}.}
Several state-of-the-art methods rely on gradient-aggregation mechanisms that require communicating additional model-sized vectors, widening the communication channel and enlarging the attack surface. SCAFFOLD~\cite{SCAFFOLD} is the canonical instance: each client maintains and transmits a control variate of the same dimension $d'$ as the model, which is an estimate of that client's local gradient. This approximately doubles the per-round uplink and, more significantly, exposes a second, gradient-valued quantity per client---precisely the object exploited by gradient-inversion and membership-inference attacks~\cite{Hitaj2017,Geyer2017}, and one that is not protected by the averaging applied to the model update. \textsc{FedeRage} transmits instead a single scalar $\beta_i$, an estimate of a quantile of the client's own loss distribution: it is smaller by a factor $d'$, carries no directional information about the data, and has sensitivity bounded by the diameter of $\mathcal{B}$, so that differential-privacy noise calibrated to it costs a small fraction of the utility required to privatize a $d'$-dimensional control variate. Both $\theta_i$ and $\beta_i$ reach the server only through plain averages, so \textsc{FedeRage} is compatible with secure aggregation without modification. Robustness to heterogeneity is thus obtained through the \emph{local objective} rather than through additional shared state.

\begin{figure*}[t]
    \centering
    \begin{minipage}[b]{0.49\textwidth}
        \centering
        \includegraphics[width=\textwidth]{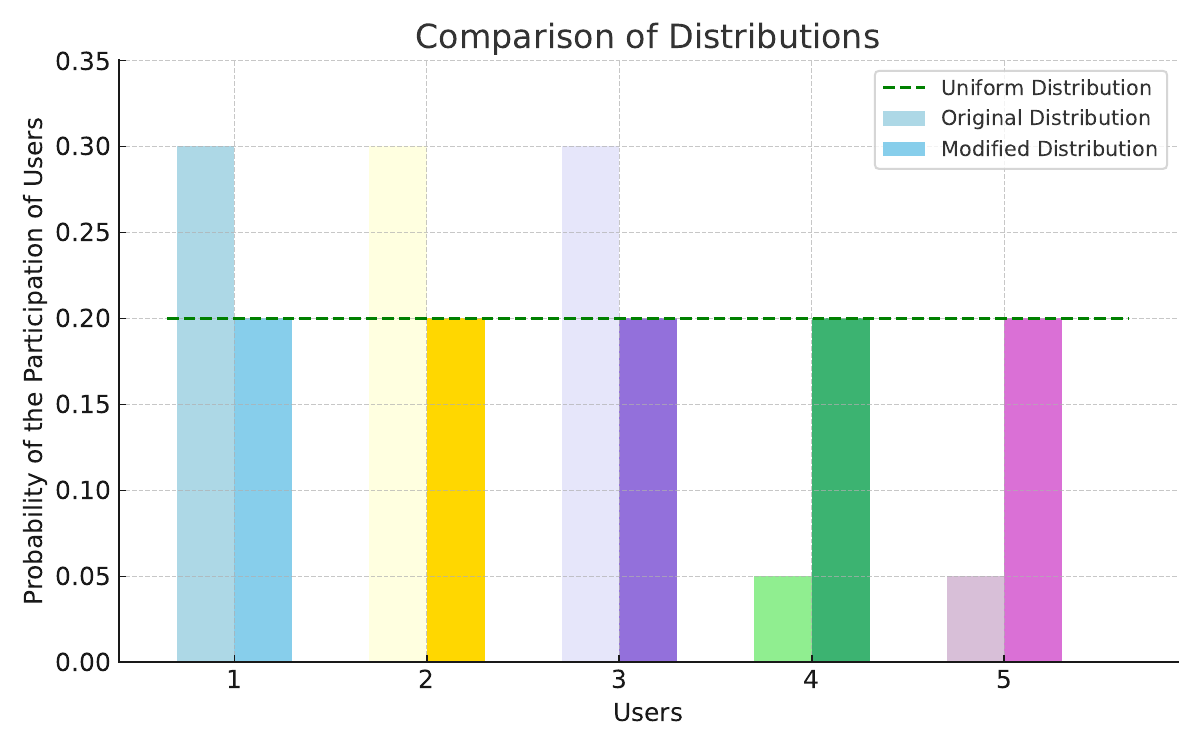}
        \caption*{(a) Client availability probabilities (light colors), the uniform distribution (dashed green line), and the effective availability distribution induced by CVaR with $\gamma=0.25$ and $\alpha = 0.01$. The change in the effective distribution is implicit, arising from the dual representation of CVaR.}
    \end{minipage}
    \hfill
    \begin{minipage}[b]{0.49\textwidth}
        \centering
        \includegraphics[width=\textwidth]{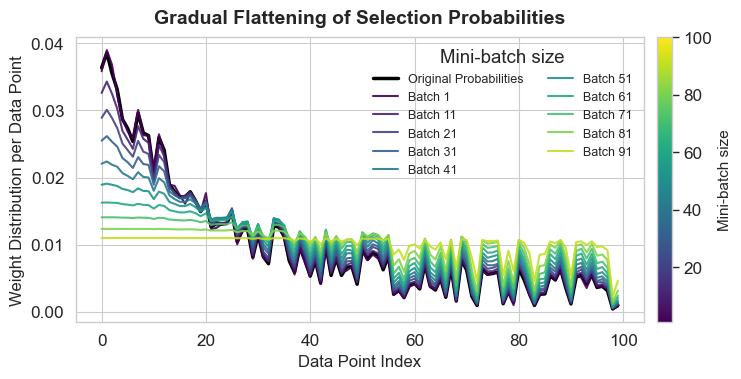}
        \caption*{(b) Given $100$ data points with an initial weight distribution (black), the effective weights change with the mini-batch size. As the mini-batch approaches the full batch, the effective weights become uniform (green). This loss of variability suggests that too large a mini-batch degrades performance by reducing the effective degrees of freedom.}
    \end{minipage}
    \caption{(a) Effect of CVaR on the effective client-availability distribution. (b) Effect of mini-batch size on effective selection probabilities. Both illustrate how distributional variability influences federated optimization dynamics.}
    \label{fig:cvar_vs_batch_effects}
    \vspace{-12pt}
\end{figure*}

\vspace{-8pt}
\subsection{Risk-Averse (Primal) Interpretation of \textsc{FedeRage}}

To interpret the objective~\eqref{eq:fag}, consider again the example of
Fig.~\ref{fig:motivation}, with two infrequent clients ($4$ and $5$) alongside
three frequently participating ones ($1$--$3$). Agnostic FedAvg minimizes the
participation-weighted loss $f(\theta)=\sum_{i=1}^{N}p_i f_i(\theta)$
of~\eqref{EmpricialRiskNeutralFL}, in which the induced weights $p_4,p_5$ are
small, so that the local losses $f_4(\theta)$ and $f_5(\theta)$ contribute little
to the global objective even when they remain large. Minimization of $f$ is
therefore driven by clients $1$--$3$, and the resulting model fits the dominant
clients while neglecting the rare patterns held by the underrepresented ones.

The mean--CVaR integrand of~\eqref{eq:fag}
\begin{align}
\nonumber
g_{\alpha,\gamma}(\theta, \beta; \xi) &:= (1 - \gamma)\, f(\theta; \xi)
+ \gamma \Big( \beta + \tfrac{1}{\alpha}\big(f(\theta; \xi) - \beta\big)_+ \Big),
\end{align}
whose expectation over $\xi\sim\mathcal{Q}^b$ is $\fag(\theta,\beta)$, modifies
the influence of each mini-batch $\xi$ in a risk-sensitive manner. Recall that
$\mathcal{Q}^b=\sum_{i=1}^N p_i\,\mathcal{Q}^b_i$, so that a mini-batch of
client $i$ carries nominal mass $p_i/N_i^b$ under $\mathcal{Q}^b$; the threshold
$\beta$ is accordingly a quantile (estimate) of the loss under the \emph{global} mixture,
maintained locally as $\beta_i$ between communication rounds and averaged at the
server (Algorithm~\ref{alg:federage}). Whenever the mini-batch loss
$f(\theta; \xi)$ exceeds $\beta$, which at optimality is an $(1-\alpha)$-quantile under $\mathcal{Q}^b$, the corresponding subgradient is upweighted as
\begin{align}\nonumber
\nabla_\theta\, g_{\alpha,\gamma}(\theta, \beta; \xi)
= \underbrace{\big(1 - \gamma + \tfrac{\gamma}{\alpha}\,
\mathds{1}\{f(\theta; \xi) \geq \beta\}\big)}_{=:\,w(\theta,\beta;\xi)}\,
\nabla_\theta f(\theta; \xi),
\end{align}
where $\nabla_\theta f(\theta;\xi)$ denotes the measurable subgradient selection
fixed in Section~\ref{Convergence} and $w(\theta,\beta;\xi)$ is precisely the
scalar risk weight $w_i^t$ of Line~8 of Algorithm~\ref{alg:federage}, evaluated
at $(\theta_i^{t-1},\beta_i^{t-1};\xi_i^t)$.

The expression $\omega(\theta,\beta;\xi)$ constitutes a risk-weighted
prioritization mechanism. If the mini-batches of client $i$ consistently incur
large losses---whether because of infrequent RAM selection (small $p_i$) or
because of misalignment with the global model (client drift)---they are more
likely to exceed the threshold $\beta$, and their subgradients are scaled by a
factor of up to $\kappa(\alpha,\gamma)=(1-\gamma)+\gamma/\alpha$, amplifying
their contribution to the update. Underrepresented or difficult data thereby
receive proportionally greater attention, without the algorithm ever using, or
even forming, an estimate of $\{p_i\}$.

The mean--CVaR objective is furthermore a smooth surrogate for the worst-case
loss. In the limit $\gamma = 1$, $\alpha \to 0^{+}$, the CVaR term tends to
$\esssup_{\xi\sim\mathcal{Q}^b} f(\theta;\xi)$, so that
minimizing~\eqref{eq:fag} approaches minimization of the largest mini-batch loss
in the support of $\mathcal{Q}^b$, up to smoothing; since that support is the
union of the clients' mini-batch collections $\{D_i^b\}_{i\in[N]}$, this steers
the global model toward regions in which the largest client losses
$\max_{i\in[N]} f_i(\theta)$ are reduced, thereby promoting an equity across
clients that the risk-neutral formulation~\eqref{EmpricialRiskNeutralFL} does
not enforce.

\vspace{-4pt}
\subsection{Distributionally Robust (Dual) Interpretation of \textsc{FedeRage}}
\label{sec:dual_federage}

We now use the dual representation~\eqref{DualCVARRepresentation} to interpret the CVaR term in~\eqref{eq:fag} in a complementary distributionally robust manner. Recall the expected loss over RAM-distributed mini-batches
\begin{align}
\mathbb{E}_{\xi \sim \mathcal{Q}^{b}} [f(\theta; \xi)]
&= \sum_{i=1}^N p_i \Big( \tfrac{1}{N_{i}^{b}} \sum_{j=1}^{N_{i}^{b}} f_i(\theta; \xi_i^j) \Big)
= \sum_{i=1}^N \sum_{j=1}^{N_{i}^{b}} q_i^j\, f_i(\theta; \xi_i^j), \nonumber
\end{align}
\vspace{-14pt}

\noindent where the weights $q_i^j = p_i/N_i^b$ constitute $\mathcal{Q}^b$. By the dual form of CVaR, it follows that
\begin{align}
\cvar_{\xi \sim \mathcal{Q}^b}^\alpha[f(\theta; \xi)]
&= \sup_{\hat{\mathcal{Q}} \in \mathds{U}_{\alpha}(\mathcal{Q}^b)} \mathbb{E}_{\xi \sim \hat{\mathcal{Q}}}[f(\theta; \xi)] \nonumber\\
&= \sup_{\hat{\mathcal{Q}} \in \mathds{U}_{\alpha}(\mathcal{Q}^b)} \sum_{i=1}^N \sum_{j=1}^{N_{i}^{b}} \hat{q}_i^j\, f_i(\theta; \xi_i^j), \nonumber
\end{align}
where the mini-batch ambiguity set is
\begin{align}
\mathds{U}_{\alpha}(\mathcal{Q}^b) = \Big\{ \hat{\mathcal{Q}} \,\Big|\, \tfrac{\hat{q}_i^j}{p_i} \leq \tfrac{1}{N_{i}^{b} \alpha},\ i \in [N],\, j \in [N_i^b] \Big\}. \nonumber
\end{align}
\vspace{-7pt}

Expanding the loss over individual data points, we have
\begin{align}
&\mathbb{E}_{\xi\sim\hat{\mathcal{Q}}}\big[f(\theta;\xi)\big] \nonumber
\\&=\sum_{i=1}^{N}\sum_{j=1}^{N_{i}^{b}}\hat{q}_{i}^{j}\sum_{(X,Y)\in\xi_{i}^{j}}\tfrac{1}{b}\,\ell(m(X,\theta),Y) \nonumber
\\&=\sum_{i=1}^{N}\sum_{k=1}^{n_{i}}
\underbrace{\Bigg[\tfrac{1}{b}\sum_{j=1}^{N_{i}^{b}}\hat{q}_{i}^{j}\,\mathds{1}\{(X_{i}^{k},Y_{i}^{k})\in\xi_{i}^{j}\}\Bigg]}_{\triangleq\,\tilde{q}_{i}^{k}}\ell(m(X_{i}^{k},\theta),Y_{i}^{k}) \nonumber
\\&=\sum_{i=1}^{N}\sum_{k=1}^{n_{i}}\tilde{q}_{i}^{k}\,\ell(m(X_{i}^{k},\theta),Y_{i}^{k}), \nonumber
\end{align}
\vspace{-7pt}

\noindent where the \emph{inherited} weights $\{\tilde{q}_i^k\}$ induce a reweighted importance distribution $\tilde{\mathcal{Q}}$ over individual data points with CVaR
\vspace{-7pt}

\begin{align}
&\cvar_{\xi\sim \mathcal{Q}^{b}}^{\alpha}\big(f(\theta;\xi)\big) \nonumber \\
&\qquad=
\sup_{\tilde{\mathcal{Q}}\in\tilde{\mathds{U}}_{\alpha}(\mathcal{Q}^{b})}\ \mathbb{E}_{(I,(X,Y))\sim\tilde{\mathcal{Q}}}\big[\ell(m(X,\theta),Y)\big],\nonumber
\end{align}
\vspace{-7pt}

with the inheritance uncertainty set
\begin{align}
\tilde{\mathds{U}}_{\alpha}(\mathcal{Q}^{b})=\Big\{\tilde{\mathcal{Q}}\ \Big|\
&\tilde{q}_{i}^{k}=\tfrac{1}{b}\textstyle\sum_{j=1}^{N_{i}^{b}}\hat{q}_{i}^{j}\,\mathds{1}\{(X_{i}^{k},Y_{i}^{k})\in\xi_{i}^{j}\},\nonumber\\
&\tfrac{\tilde{q}_{i}^{k}}{p_{i}}\le\tfrac{1}{n_{i}\alpha},\ i\in[N],\,k\in[n_i],\nonumber\\
&\{\hat{q}_{i}^{j}\}=\hat{\mathcal{Q}}\in\mathds{U}_{\alpha}(\mathcal{Q}^{b})\Big\}.\nonumber
\end{align}
In words, the distributional robustness that CVaR confers on the mini-batch distribution $\hat{\mathcal{Q}}$ is \emph{inherited} by the induced point-level distribution $\tilde{\mathcal{Q}}$: maximizing over the mini-batch ambiguity set $\mathds{U}_{\alpha}(\mathcal{Q}^b)$ is equivalent to maximizing over the structured point-level set $\tilde{\mathds{U}}_{\alpha}(\mathcal{Q}^{b})$, which couples the CVaR reweighting with the RAM-induced participation weights $p_i$. \textsc{FedeRage} is therefore distributionally robust at the level of individual data points, hedging against the client drift produced jointly by data heterogeneity and skewed availability. This reweighting is realized \emph{implicitly}, through the local objective, and never appears as an explicit aggregation weight at the server, which is what allows the algorithm to operate without any knowledge of the participation law.

In the special case $b = 1$, in which mini-batches are single points, the two levels coincide:
\begin{align}
\cvar_{\xi \sim \mathcal{Q}^b}^\alpha\big(f(\theta; \xi)\big)
= \cvar_{(I, (X, Y)) \sim \tilde{\mathcal{Q}}}^\alpha\big(\ell(m(X, \theta), Y)\big), \nonumber
\end{align}
whereas for $b > 1$ the right-hand side is a relaxation induced by overlapping mini-batch structures (Fig.~\ref{fig:cvar_vs_batch_effects}). Incorporating CVaR thus reweights samples according to their contribution to the upper quantile of the loss distribution. At each communication round, harder examples---data points or clients with higher loss---are prioritized, so that the algorithm attends to poorly performing or underrepresented clients, improving robustness to data heterogeneity and worst-case performance.

\section{Convergence for Convex Losses}\label{Convergence}

This section develops the convergence analysis in two stages, with every proof given in place. We first treat the risk-neutral case $\gamma=0$, i.e., agnostic FedAvg applied to the induced objective~\eqref{BatchFL} (Sections~\ref{Convergence}-A through~\ref{Convergence}-C), and then extend the argument to the mean--CVaR objective $\fag$ for $\gamma>0$ and $\alpha\in(0,1]$ (Section~\ref{Convergence}-D). The separation is deliberate: the risk-neutral analysis isolates the effect of unknown, variable-size participation, and the risk-averse extension is then shown to cost exactly one explicit constant.

Two sources of randomness enter. First, for each client $i\in[N]$, the random element $\xi_i$ denotes a mini-batch of size $b$ drawn uniformly without replacement from client $i$'s dataset; at time $t$ we write $\xi_i^t$, and mini-batches are drawn independently across clients and rounds. Second, $S^t \subseteq [N]$ denotes the subset of clients selected at round $t$ under the RAM, i.i.d.\ across rounds and independent of $\{\xi_i^t\}$. Both sources are carried through every round, with the understanding that the sampled $\xi_i^t$ are unused during global rounds and the outcome $S^t$ is unused during local rounds. We define the natural filtration $\{\mathcal{F}_t\}_t$,
\[
\mathcal{F}_t := \sigma\big( \theta_i^s, \xi_i^s, S^s : s \leq t,\, i \in [N] \big),
\]
which captures the model states, mini-batch draws, and participation history up to round $t$; the iterates $\theta_i^t$ are adapted to $\{\mathcal{F}_t\}$ and depend on $\mathcal{F}_{t-1}$ only through $\theta_i^{t-1}$ and the fresh randomness $\xi_i^t$ or $S^t$.

Since the losses are convex and possibly nonsmooth, $\nabla f_i(\theta)$ and $\nabla f(\theta;\xi)$ denote throughout a fixed measurable selection from the subdifferentials $\partial f_i(\theta)$ and $\partial_\theta f(\theta;\xi)$, chosen so that the unbiasedness required in Assumption~\ref{assump:gradient_bound} holds; all inequalities below use only the convexity (subgradient) inequality and therefore remain valid for any such selection. Our structural hypotheses are the following two; the bounded-variance condition used in the analysis is derived from them rather than assumed.

\vspace{-3pt}
\begin{assumption}[\textbf{Convexity}]
\label{assump:convex}
The ``instantaneous" losses   $f(\cdot;\xi)$ are convex on $\Theta$ for every mini-batch $\xi$.
\end{assumption}
\vspace{-9pt}
\begin{assumption}[\textbf{Unbiasedness and Bounded Second Moment}]
\label{assump:gradient_bound}
For each $i \in [N]$ and every $\theta\in\Theta$, the stochastic subgradient is conditionally unbiased, $\mathbb{E}_{\xi_i}[\nabla f(\theta;\xi_i)]=\nabla f_i(\theta)$, and
\[
\sup_{\theta \in \Theta}\mathbb{E}_{\xi_i} \big[ \| \nabla f(\theta; \xi_i) \|^2 \big] \leq G^2 .
\]
\end{assumption}

\begin{lemma}[\textbf{Bounded Local Gradient Variance}]
\label{lemma:variance}
Under Assumption~\ref{assump:gradient_bound}, for every $i \in [N]$,
\[
\sup_{\theta \in \Theta}\mathbb{E}_{\xi_i} \big[ \| \nabla f(\theta; \xi_i) - \nabla f_i(\theta) \|^2 \big] \leq \sigma_i^2 := G^2 - \inf_{\theta\in\Theta}\|\nabla f_i(\theta)\|^2,
\]
and therefore the aggregate variance bound $\sigma^2 := \sum_{i=1}^N p_i \sigma_i^2$ satisfies $\sigma^2\le G^2$.
\end{lemma}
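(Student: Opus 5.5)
The plan is to use the bias--variance decomposition of the second moment, which holds because the stochastic subgradient is conditionally unbiased. Fix $i\in[N]$ and $\theta\in\Theta$. Expand the square and use Assumption~\ref{assump:gradient_bound}, namely $\mathbb{E}_{\xi_i}[\nabla f(\theta;\xi_i)]=\nabla f_i(\theta)$. This gives
\begin{align*}
\mathbb{E}_{\xi_i}\big[\|\nabla f(\theta;\xi_i)-\nabla f_i(\theta)\|^2\big]
&=\mathbb{E}_{\xi_i}\big[\|\nabla f(\theta;\xi_i)\|^2\big]-2\big\langle \mathbb{E}_{\xi_i}[\nabla f(\theta;\xi_i)],\nabla f_i(\theta)\big\rangle+\|\nabla f_i(\theta)\|^2\\
&=\mathbb{E}_{\xi_i}\big[\|\nabla f(\theta;\xi_i)\|^2\big]-\|\nabla f_i(\theta)\|^2 .
\end{align*}
Finiteness of every term follows from the second-moment bound together with Jensen's inequality, which gives $\|\nabla f_i(\theta)\|^2\le G^2$. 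The expansion is therefore legitimate.

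Next, bound the first term by $G^2$ using the uniform second-moment bound, and bound $-\|\nabla f_i(\theta)\|^2$ from above by $-\inf_{\theta'\in\Theta}\|\nabla f_i(\theta')\|^2$. The resulting right-hand side $G^2-\inf_{\theta'}\|\nabla f_i(\theta')\|^2=\sigma_i^2$ does not depend on $\theta$. Taking the supremum over $\theta\in\Theta$ then yields the per-client claim. Note that $\sigma_i^2\ge 0$ because each variance is nonnegative; equivalently, $\inf_\theta\|\nabla f_i(\theta)\|^2\le G^2$. Also $\sigma_i^2\le G^2$, since the infimum is nonnegative.

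For the aggregate statement, $\mathcal{P}=(p_1,\dots,p_N)$ is a probability vector by~\eqref{eq:general_p}, with $p_i\ge 0$ and $\sum_i p_i=1$. Hence $\sigma^2=\sum_i p_i\sigma_i^2\le \sum_i p_i G^2=G^2$.

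I do not expect a real obstacle. The only points needing care are that the fixed measurable subgradient selection is the one for which unbiasedness holds, which the paper stipulates, and that the cross term is handled by linearity of expectation pulled through the inner product. The latter requires integrability, which Assumption~\ref{assump:gradient_bound} provides.
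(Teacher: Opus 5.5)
Your proposal is correct and follows essentially the same route as the paper: the bias--variance identity $\mathbb{E}\|X-\mathbb{E}X\|^2=\mathbb{E}\|X\|^2-\|\mathbb{E}X\|^2$ under unbiasedness yields the bound $G^2-\|\nabla f_i(\theta)\|^2$, a supremum over $\theta$ gives the per-client claim, and $\sigma^2\le G^2$ follows from $\sigma_i^2\le G^2$ with $\{p_i\}$ a probability vector. Your explicit remarks on integrability and on $0\le\sigma_i^2\le G^2$ are small additions that the paper leaves implicit.
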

\vspace{-8pt}
\begin{proof}
Fix $\theta\in\Theta$ and set $X:=\nabla f(\theta;\xi_i)$, so that $\mathbb{E}X=\nabla f_i(\theta)$ by Assumption~\ref{assump:gradient_bound}. The bias--variance identity yields
\[
\mathbb{E}\|X-\mathbb{E}X\|^2=\mathbb{E}\|X\|^2-\|\mathbb{E}X\|^2\le G^2-\|\nabla f_i(\theta)\|^2 .
\]
Taking the supremum over $\theta\in\Theta$ gives the first claim, and the bound on $\sigma^2$ follows because $\{p_i\}_{i\in[N]}$ is a probability distribution and each $\sigma_i^2\le G^2$.
\end{proof}

We retain the symbol $\sigma^2$ below, as it is the sharper of the two constants and isolates the contribution of sampling noise; every statement below remains valid with $\sigma^2$ replaced by $G^2$.

Throughout, $\mathcal{C}\subseteq\Theta$ is convex and compact with Euclidean projection $\Pi_\mathcal{C}(\cdot)$, and $\theta^*$ denotes a minimizer of $f$ over $\mathcal{C}$, which exists by continuity of $f$ and compactness of $\mathcal{C}$. Since each $f_i$ is convex and finite on a neighborhood of the compact set $\mathcal{C}$, it is Lipschitz on $\mathcal{C}$; we let $\ell$ denote a common Lipschitz constant.

\vspace{-6pt}
\subsection{Preliminary Lemmata}

We first analyze a single projected stochastic subgradient step at the client side. Throughout, $g_i^t := \nabla f(\theta_i^{t-1};\xi_i^t)$, so that $\mathbb{E}[g_i^t\mid\mathcal{F}_{t-1}]=\nabla f_i(\theta_i^{t-1})$ by Assumption~\ref{assump:gradient_bound}; this is the update direction of Line~9 of Algorithm~\ref{alg:fedavg}. Conditional expectations are unrolled by the tower property.

\begin{lemma}[\textbf{One-Step Progress}]
\label{lemma:onestep}
Let $\theta_i^{t} = \Pi_{\mathcal{C}}(\theta_i^{t-1} - \eta g_i^{t})$ for $i\in[N]$ and $\eta>0$. Under Assumptions~\ref{assump:convex} and~\ref{assump:gradient_bound} and Lemma~\ref{lemma:variance}, at every local round,
\begin{align}
\mathbb{E}\big[ \| \theta_i^t - \theta^* \|^2 \mid \mathcal{F}_{t-1} \big]
&\leq \| \theta_i^{t-1} - \theta^* \|^2 \nonumber \\
&\quad - 2\eta \big(f_i(\theta_i^{t-1}) - f_i(\theta^*) \big) \nonumber \\
&\quad + 2\eta^2 (\sigma^2 + G^2). \label{eq:l1}
\end{align}
\end{lemma}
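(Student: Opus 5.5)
The plan is the standard projected stochastic subgradient analysis, written with conditional expectations. Since $\theta^*\in\mathcal{C}$, we have $\Pi_{\mathcal{C}}(\theta^*)=\theta^*$. Nonexpansiveness of the Euclidean projection onto the convex set $\mathcal{C}$ then gives
\[
\|\theta_i^t-\theta^*\|^2\le\|\theta_i^{t-1}-\eta g_i^t-\theta^*\|^2 .
\]
Expanding the right-hand side yields three terms:
\[
\|\theta_i^{t-1}-\theta^*\|^2-2\eta\langle g_i^t,\theta_i^{t-1}-\theta^*\rangle+\eta^2\|g_i^t\|^2 .
\]
We then take $\mathbb{E}[\cdot\mid\mathcal{F}_{t-1}]$. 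The first term is $\mathcal{F}_{t-1}$-measurable, so it passes through unchanged.

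For the cross term, $\theta_i^{t-1}$ is $\mathcal{F}_{t-1}$-measurable and $\xi_i^t$ is fresh, independent of $\mathcal{F}_{t-1}$. Assumption~\ref{assump:gradient_bound} therefore gives $\mathbb{E}[g_i^t\mid\mathcal{F}_{t-1}]=\nabla f_i(\theta_i^{t-1})$. Applying the subgradient inequality for the convex function $f_i$ (a $p$-free average of the convex $f(\cdot;\xi)$ by Assumption~\ref{assump:convex}) then gives
\[
\langle\nabla f_i(\theta_i^{t-1}),\theta_i^{t-1}-\theta^*\rangle\ge f_i(\theta_i^{t-1})-f_i(\theta^*),
\]
which produces the $-2\eta(f_i(\theta_i^{t-1})-f_i(\theta^*))$ term.

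For the quadratic term, we write $g_i^t=(g_i^t-\nabla f_i(\theta_i^{t-1}))+\nabla f_i(\theta_i^{t-1})$ and use $\|a+b\|^2\le2\|a\|^2+2\|b\|^2$. This gives
\[
\mathbb{E}[\|g_i^t\|^2\mid\mathcal{F}_{t-1}]\le 2\sigma_i^2+2\|\nabla f_i(\theta_i^{t-1})\|^2 ,
\]
using Lemma~\ref{lemma:variance} for the first piece. By Jensen, $\|\nabla f_i\|^2=\|\mathbb{E}\nabla f(\cdot;\xi_i)\|^2\le G^2$. The bound is loose relative to the direct estimate $\mathbb{E}\|g_i^t\|^2\le G^2$, but it matches the stated form $2\eta^2(\sigma^2+G^2)$.

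The one delicate point is that the lemma is stated with the aggregate $\sigma^2=\sum_ip_i\sigma_i^2$ rather than the per-client $\sigma_i^2$, and $\sigma_i^2\le\sigma^2$ need not hold. I would handle this by noting that every $\sigma_i^2\le G^2$, so
\[
2\sigma_i^2+2G^2\le 2G^2+2G^2 .
\]
Reading $\sigma^2$ in the statement as the per-client $\sigma_i^2$, or as $\max_i\sigma_i^2$, the claim follows directly; and since the paper explicitly allows replacing $\sigma^2$ by $G^2$ throughout, the bound remains valid in that reading too. Alternatively, one can simply observe $\eta^2\mathbb{E}\|g_i^t\|^2\le\eta^2G^2\le2\eta^2(\sigma^2+G^2)$, since $\sigma^2\ge0$; this is the cleanest route and avoids the issue entirely.

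Combining the three terms gives \eqref{eq:l1}.
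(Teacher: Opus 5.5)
Your proposal is correct and follows the paper's proof essentially step for step: nonexpansiveness of $\Pi_{\mathcal{C}}$, expansion of the square, conditional unbiasedness for the cross term, the subgradient inequality for the convex $f_i$, and $\|a+b\|^2\le 2\|a\|^2+2\|b\|^2$ for the second moment. You are also right that the paper's proof silently replaces the per-client $\sigma_i^2$ supplied by Lemma~\ref{lemma:variance} with the aggregate $\sigma^2$ at that last step, and your direct bound $\mathbb{E}[\|g_i^t\|^2\mid\mathcal{F}_{t-1}]\le G^2\le 2(\sigma^2+G^2)$ is the clean fix that makes the inequality hold exactly as stated (the intermediate detour of reading $\sigma^2$ as $\sigma_i^2$ or $\max_i\sigma_i^2$ is unnecessary).
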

\vspace{-8pt}
\begin{proof}
Since $\theta^*\in\mathcal{C}$ and the projection onto a closed convex set is non-expansive,
\begin{align}
\|\theta_i^t - \theta^*\|^2
&= \| \Pi_{\mathcal{C}}(\theta_i^{t-1} - \eta g_i^t) - \Pi_{\mathcal{C}}(\theta^*) \|^2 \nonumber \\
&\leq \| \theta_i^{t-1} - \eta g_i^t - \theta^* \|^2 \nonumber \\
&= \| \theta_i^{t-1} - \theta^* \|^2 - 2\eta \langle g_i^t, \theta_i^{t-1} - \theta^* \rangle + \eta^2 \|g_i^t\|^2. \nonumber
\end{align}
Take $\mathbb{E}[\,\cdot\mid\mathcal{F}_{t-1}]$ on both sides. Writing $g_i^t = \nabla f_i(\theta_i^{t-1}) + \zeta_i^t$ with $\mathbb{E}[\zeta_i^t\mid\mathcal{F}_{t-1}]=0$, and noting that $\theta_i^{t-1}$ is $\mathcal{F}_{t-1}$-measurable,
\[
\mathbb{E}\big[ \langle g_i^t, \theta_i^{t-1} - \theta^* \rangle \mid \mathcal{F}_{t-1} \big]
= \langle \nabla f_i(\theta_i^{t-1}), \theta_i^{t-1} - \theta^* \rangle ,
\]
while $\|a+b\|^2\le 2\|a\|^2+2\|b\|^2$ together with Assumption~\ref{assump:gradient_bound} and Lemma~\ref{lemma:variance} gives
\[
\mathbb{E}\big[ \|g_i^t\|^2 \mid \mathcal{F}_{t-1} \big] \leq 2 \|\nabla f_i(\theta_i^{t-1})\|^2+2\,\sigma^2 \le 2 G^2 + 2 \sigma^2 .
\]
Combining the two displays,
\begin{multline}
\mathbb{E}\big[ \|\theta_i^t - \theta^*\|^2 \mid \mathcal{F}_{t-1} \big]
\leq \|\theta_i^{t-1} - \theta^*\|^2 \\
- 2\eta \langle \nabla f_i(\theta_i^{t-1}), \theta_i^{t-1} - \theta^* \rangle + 2\eta^2 (G^2 + \sigma^2). \nonumber
\end{multline}
The subgradient inequality for the convex function $f_i$ (Assumption~\ref{assump:convex}) gives $f_i(\theta_i^{t-1}) - f_i(\theta^*) \leq \langle \nabla f_i(\theta_i^{t-1}), \theta_i^{t-1} - \theta^* \rangle$, and substituting it yields~\eqref{eq:l1}.
\end{proof}

We next bound the divergence between two clients' parameter vectors within a communication round.

\begin{lemma}[\textbf{Local Parameter Divergence}]
\label{lemma:local_divergence}
Let $i, j \in [N]$ and let $\tau_i, \tau_j \in [SH, SH +H]$ be local steps between consecutive communication rounds $S$ and $S+1$. Under Assumption~\ref{assump:gradient_bound},
\[
\mathbb{E}\big[\|\theta_i^{\tau_i} - \theta_j^{\tau_j}\| \,\big|\, \mathcal{F}_{SH}\big] \leq 4\eta G H.
\]
\end{lemma}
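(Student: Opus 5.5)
The plan is to route both trajectories through the common broadcast point. At communication round $S$ the server broadcasts $\hat\theta^{SH}$ and sets $\theta_i^{SH}=\theta_j^{SH}=\hat\theta^{SH}$. The triangle inequality then gives
\[
\|\theta_i^{\tau_i}-\theta_j^{\tau_j}\|\le \|\theta_i^{\tau_i}-\hat\theta^{SH}\|+\|\theta_j^{\tau_j}-\hat\theta^{SH}\|,
\]
so it suffices to show that each single-client drift $\mathbb{E}[\|\theta_i^{\tau}-\hat\theta^{SH}\|\mid\mathcal{F}_{SH}]$ is at most $2\eta GH$ for every $\tau\in[SH,SH+H]$.

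For the single-client bound I will telescope over the at most $H$ local steps between $SH$ and $\tau$:
\[
\|\theta_i^{\tau}-\hat\theta^{SH}\|\le\sum_{s=SH+1}^{\tau}\|\theta_i^{s}-\theta_i^{s-1}\|.
\]
Each increment is controlled through the projection. Since $\theta_i^{s-1}\in\mathcal{C}$, non-expansiveness gives
\[
\|\Pi_{\mathcal{C}}(\theta_i^{s-1}-\eta g_i^s)-\Pi_{\mathcal{C}}(\theta_i^{s-1})\|\le\eta\|g_i^s\|.
\]
This requires $\hat\theta^{SH}\in\mathcal{C}$, which holds because it is a convex combination of points of the convex set $\mathcal{C}$; the initialization at $\mathbf 0$ needs $\mathbf 0\in\mathcal{C}$, or else a first projection. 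Taking $\mathbb{E}[\cdot\mid\mathcal{F}_{SH}]$ and using the tower property step by step, Jensen and Assumption~\ref{assump:gradient_bound} give
\[
\mathbb{E}[\|g_i^s\|\mid\mathcal{F}_{s-1}]\le\big(\mathbb{E}[\|g_i^s\|^2\mid\mathcal{F}_{s-1}]\big)^{1/2}\le G,
\]
uniformly in $\theta_i^{s-1}\in\Theta$. Summing over at most $H$ steps yields a drift of at most $\eta GH$ per client, and hence $2\eta GH$ for the pair.

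The stated constant $4\eta GH$ is looser, so there is slack. That slack also absorbs the case $\tau_i$ or $\tau_j$ equal to the endpoint $SH+H$. In the paper's indexing that endpoint is itself a communication step, where the iterate is replaced by a fresh average, so at most $H$ gradient steps can separate any two relevant iterates from a common anchor. If one prefers to bound $\mathbb{E}\|g_i^s\|^2\le 2G^2+2\sigma^2\le4G^2$, as in Lemma~\ref{lemma:onestep}, then $\mathbb{E}\|g_i^s\|\le 2G$ and the per-client bound becomes $2\eta GH$, giving exactly $4\eta GH$. I will present this version to stay consistent with the constants used earlier.

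The only delicate point is measurability and conditioning: the bound on $\mathbb{E}\|g_i^s\|$ holds conditionally on $\mathcal{F}_{s-1}$, and $\theta_i^{s-1}$ is random given $\mathcal{F}_{SH}$. Because Assumption~\ref{assump:gradient_bound} is uniform over $\theta\in\Theta$, the conditional bound is deterministic, and iterating the tower property from $s=\tau$ down to $SH+1$ is routine. No convexity is needed; the argument uses only the second-moment bound, as the statement indicates.
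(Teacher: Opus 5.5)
Your proof is correct, and it is organized differently from the paper's. The paper tracks the pairwise gap $\Delta_t=\theta_i^t-\theta_j^t$. It applies non-expansiveness of $\Pi_{\mathcal{C}}$ to the two clients' projected points and expands $\|\Delta_{t-1}-\eta(g_i^t-g_j^t)\|^2$. It then bounds the cross term and the second moment of $g_i^t-g_j^t$ by worst-case constants, and takes a square root via Jensen to get $\mathbb{E}[\|\Delta_t\|\mid\mathcal{F}_{t-1}]\le\|\Delta_{t-1}\|+2\eta G$ from $\Delta_{SH}=0$. Only afterwards does it add a separate single-client drift term for $\tau_i\neq\tau_j$, reaching $3\eta GH$ and loosening to $4\eta GH$.

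You instead route both iterates through the broadcast point and use only the single-client drift bound. The paper extracts no cancellation between $g_i^t$ and $g_j^t$, so its recursion yields the same $2\eta G$ per step for the pair that your triangle inequality gives directly. Your route therefore needs no squared-norm expansion and treats unequal $\tau_i,\tau_j$ without a separate case. With $\mathbb{E}[\|g_i^s\|\mid\mathcal{F}_{s-1}]\le G$ it already gives $2\eta GH$, so degrading to $2G$ just to land on $4\eta GH$ is unnecessary. The pairwise recursion has its own advantages. It needs non-expansiveness only between two projected points, so the anchor need not lie in $\mathcal{C}$ and your caveat about $\mathbf 0$ does not arise there. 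It is also the natural starting point if one later wants a bound in terms of gradient dissimilarity rather than the uniform constant $G$.

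One step you should write out, rather than attribute to slack, is the endpoint $\tau=(S+1)H$; this is exactly the case invoked in the proof of Theorem~\ref{thm:main}. There the last increment is an averaging step, not a gradient step, so you cannot telescope through it. Instead use convexity of the norm, $\|\hat\theta^{(S+1)H}-\hat\theta^{SH}\|\le|S^{t}|^{-1}\sum_{k\in S^{t}}\|\theta_k^{t-1}-\hat\theta^{SH}\|$ with $t=(S+1)H$. Then use the independence of $S^{t}$ from $\mathcal{F}_{t-1}$, as in the proof of Lemma~\ref{lemma:s2m}, to replace the random-subset average by $\sum_k p_k(\cdot)$ in conditional expectation. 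This gives at most $\eta G(H-1)$. The paper's own argument passes over this endpoint in the same way.
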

\vspace{-8pt}
\begin{proof}
Assume without loss of generality that $\tau_i \geq \tau_j$, and set $\Delta_t := \theta_i^t - \theta_j^t$. At each local step, non-expansiveness of the projection gives
\begin{align}
\|\Delta_t\| &= \big\| \Pi_{\mathcal{C}}(\theta_i^{t-1}\! - \eta g_i^t) - \Pi_{\mathcal{C}}(\theta_j^{t-1}\! - \eta g_j^t) \big\| \nonumber\\
&\le \big\|\Delta_{t-1} - \eta (g_i^t - g_j^t)\big\| , \nonumber
\end{align}
so that $\|\Delta_t\|^2 \leq \|\Delta_{t-1}\|^2 - 2\eta \langle \Delta_{t-1}, g_i^t - g_j^t \rangle + \eta^2 \|g_i^t - g_j^t\|^2$. By Cauchy--Schwarz and Assumption~\ref{assump:gradient_bound},
\[
-2\eta \langle \Delta_{t-1}, g_i^t - g_j^t \rangle \leq 4\eta G \|\Delta_{t-1}\|, \quad \mathbb{E}\|g_i^t - g_j^t\|^2 \leq 4G^2 ,
\]
whence $\mathbb{E}\big[ \|\Delta_t\|^2 \mid \mathcal{F}_{t-1} \big] \leq \big( \|\Delta_{t-1}\| + 2\eta G \big)^2$, and, by Jensen's inequality, $\mathbb{E}[\|\Delta_t\|\mid\mathcal{F}_{t-1}] \le \|\Delta_{t-1}\| + 2\eta G$. Since $\theta_i^{SH} = \theta_j^{SH} = \hat{\theta}^{SH}$ implies $\Delta_{SH} = 0$, iterating this recursion and using the tower property yields $\mathbb{E}[\|\Delta_{SH+h}\|\mid\mathcal{F}_{SH}] \leq 2\eta G h$ for $h\in\{0,\dots,H\}$; in particular $\mathbb{E}[\|\theta_i^{\tau_j} - \theta_j^{\tau_j}\| \mid \mathcal{F}_{SH}] \leq 2\eta G (\tau_j - SH)$. Finally, by the triangle inequality and $\|\theta_i^{\tau_i}-\theta_i^{\tau_j}\|\le\sum_{s=\tau_j+1}^{\tau_i}\eta\|g_i^s\|$,
\begin{align}
\mathbb{E}\big[\|\theta_i^{\tau_i} - \theta_j^{\tau_j}\|\,\big|\,\mathcal{F}_{SH}\big]
&\leq \mathbb{E}\big[\|\theta_i^{\tau_j} - \theta_j^{\tau_j}\|\,\big|\,\mathcal{F}_{SH}\big] \nonumber\\
&\quad + \mathbb{E}\big[\|\theta_i^{\tau_i} - \theta_i^{\tau_j}\|\,\big|\,\mathcal{F}_{SH}\big] \nonumber \\
&\leq 2\eta G (\tau_j - SH) + \eta G (\tau_i - \tau_j) \nonumber\\
&\leq 3\eta G H . \nonumber
\end{align}
Symmetrizing in $i\leftrightarrow j$ and bounding by the worst case gives the stated constant $4\eta G H$.
\end{proof}

\begin{table*}[t!]
\centering
\begin{tabular}{c|cc|cc|cc}
\toprule
\textbf{Algorithm} & \multicolumn{2}{c|}{\textbf{MNIST}} & \multicolumn{2}{c|}{\textbf{FashionMNIST}} & \multicolumn{2}{c}{\textbf{CIFAR-10}} \\
                   & \textbf{Uniform} & \textbf{Non-Uniform} & \textbf{Uniform} & \textbf{Non-Uniform} & \textbf{Uniform} & \textbf{Non-Uniform} \\
\midrule
FedAvg    & 79.54 $\pm$ 8.70  & 78.19 $\pm$ 11.12  & 75.08 $\pm$ 5.17  & 61.82 $\pm$ 10.39  & 51.43 $\pm$ 2.19  & 50.61 $\pm$ 1.48  \\
\textsc{FedProx}   & 79.19 $\pm$ 9.03  & 77.85 $\pm$ 10.96  & 75.11 $\pm$ 5.22  & 61.82 $\pm$ 10.38  & 52.18 $\pm$ 0.74  & 50.25 $\pm$ 1.75  \\
Scaffold  & \bestmean{\bestband{96.95}}\textsuperscript{\textdagger} $\pm$ \bestband{0.26}  & 95.01 $\pm$ 0.50  & \bestmean{84.24} $\pm$ 2.36  & 77.36 $\pm$ 1.09  & 51.21 $\pm$ 2.71  & 50.57 $\pm$ 1.49  \\
\textsc{FedeRage}  & 96.02 $\pm$ 0.30  & \bestmean{\bestband{95.91}}\textsuperscript{\textdagger} $\pm$ \bestband{0.41}  & \bestband{83.99}\textsuperscript{\textdagger} $\pm$ \bestband{1.71}  & \bestmean{\bestband{83.95}}\textsuperscript{\textdagger} $\pm$ \bestband{1.03}  & \bestmean{\bestband{57.41}}\textsuperscript{\textdagger} $\pm$ \bestband{1.51}  & \bestmean{\bestband{56.34}}\textsuperscript{\textdagger} $\pm$ \bestband{0.61}  \\
\bottomrule
\end{tabular}
\caption{Test accuracy (\%) $\pm$ standard deviation for FedAvg, \textsc{FedProx}, Scaffold, and \textsc{FedeRage} on MNIST, FashionMNIST, and CIFAR-10 after $10{,}000$ communication rounds, under uniform and non-uniform client availability. Entries marked \textsuperscript{\textdagger} are statistically significant (two-sample $Z$-test at the $5\%$ level, computed on the final ten communication rounds). \textbf{Bold} indicates the best mean in each column; \textcolor{blue}{blue} indicates the best mean and standard deviation \emph{jointly}, i.e., the largest value of mean $-$ standard deviation, which is the relevant figure of merit when run-to-run stability is at stake.}
\label{tab:results}
\vspace{-10pt}
\end{table*}
Lipschitz continuity of the losses then yields the following.

\begin{corollary}[\textbf{Local Value Divergence}]
\label{corollary:lipschitz_difference}
For $i, j \in [N]$ and $\tau_i, \tau_j \in [SH, SH+H]$,
\[
\mathbb{E}\big[ |f_i(\theta_i^{\tau_i}) - f_i(\theta_j^{\tau_j})| \,\big|\, \mathcal{F}_{SH}\big] \leq 4\ell \eta G H.
\]
\end{corollary}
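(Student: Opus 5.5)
The argument combines the Lipschitz continuity of $f_i$ on $\mathcal{C}$ with the parameter-divergence bound of Lemma~\ref{lemma:local_divergence}. No new probabilistic estimate is needed: the corollary follows from a pointwise inequality and the monotonicity of conditional expectation.

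\emph{Step 1: pointwise inequality.} All iterates $\theta_i^{\tau_i}$ and $\theta_j^{\tau_j}$ lie in $\mathcal{C}$. During local rounds they are outputs of $\Pi_{\mathcal{C}}$. At communication rounds they equal $\hat{\theta}^{SH}$, a convex combination of points of $\mathcal{C}$, and this combination lies in $\mathcal{C}$ because $\mathcal{C}$ is convex. Since $f_i$ is $\ell$-Lipschitz on $\mathcal{C}$, we have, almost surely,
\[
|f_i(\theta_i^{\tau_i}) - f_i(\theta_j^{\tau_j})| \le \ell\,\|\theta_i^{\tau_i} - \theta_j^{\tau_j}\| .
\]

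\emph{Step 2: conditioning and invoking the lemma.} Take $\mathbb{E}[\,\cdot\mid\mathcal{F}_{SH}]$ of both sides; conditional expectation preserves almost-sure inequalities. Then apply Lemma~\ref{lemma:local_divergence} to the right-hand side. This gives
\[
\mathbb{E}\big[|f_i(\theta_i^{\tau_i}) - f_i(\theta_j^{\tau_j})|\,\big|\,\mathcal{F}_{SH}\big] \le \ell\,\mathbb{E}\big[\|\theta_i^{\tau_i} - \theta_j^{\tau_j}\|\,\big|\,\mathcal{F}_{SH}\big] \le 4\ell\eta G H ,
\]
which is the claim.

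The only point requiring care is Step 1: confirming that every iterate involved, including the broadcast average at the start of the round, stays inside $\mathcal{C}$, where the common Lipschitz constant $\ell$ is valid. Convexity of $\mathcal{C}$ settles this. The same reasoning covers $f_j$ in place of $f_i$, since $\ell$ is common to all clients.
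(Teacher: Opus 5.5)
Your proposal is correct and matches the paper's proof step for step: the pointwise $\ell$-Lipschitz bound on $\mathcal{C}$, monotonicity of $\mathbb{E}[\,\cdot\mid\mathcal{F}_{SH}]$, and Lemma~\ref{lemma:local_divergence} together give $\ell\cdot 4\eta G H$. Your convexity argument for why the broadcast averages stay in $\mathcal{C}$ spells out a membership the paper only asserts; strictly, the window $S=0$ also needs the initialization $\mathbf{0}\in\mathcal{C}$, which neither your argument nor the paper's states.
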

\vspace{-8pt}
\begin{proof}
By $\ell$-Lipschitz continuity of $f_i$ on $\mathcal{C}$ we have $|f_i(\theta) - f_i(\theta')| \leq \ell \|\theta - \theta'\|$ for all $\theta,\theta'\in\mathcal{C}$. Applying this to $\theta_i^{\tau_i},\theta_j^{\tau_j}\in\mathcal{C}$, taking $\mathbb{E}[\,\cdot\mid\mathcal{F}_{SH}]$, and invoking Lemma~\ref{lemma:local_divergence} gives the bound $\ell\cdot 4\eta G H$.
\end{proof}

\subsection{Client Availability and the Global Update}

The server aggregation at global round $t$ is denoted by $\hat{\theta}^{t} = \frac{1}{|S^t|} \sum_{j \in S^t} \theta_j^{t-1}$. The following inequality relates this aggregation to the survival probabilities $\{p_i\}$. It is the only point at which the participation model enters the analysis, and it is established directly from the general variable-size weights~\eqref{eq:general_p}.

\begin{lemma}[\textbf{Sample-to-Model Inequality}]
\label{lemma:s2m}
At every global round $t$,
\begin{equation}\label{GlobalToLocal}
    \mathbb{E}\big[\|\hat{\theta}^{t} - \theta^*\|^2\mid \mathcal{F}_{t-1}\big]
    \le
    \sum_{i \in [N]} p_i\, \|\theta_i^{t-1} - \theta^*\|^2 .
\end{equation}
\end{lemma}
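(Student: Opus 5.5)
The plan is to condition on $\mathcal{F}_{t-1}$, use that $S^t\sim\mathcal{R}$ is independent of $\mathcal{F}_{t-1}$, and apply Jensen's inequality inside each realization of $S^t$. The key point is that the $\theta_i^{t-1}$ are $\mathcal{F}_{t-1}$-measurable, and the only fresh randomness at a global round is the RAM outcome $S^t$. Hence
\[
\mathbb{E}\big[\|\hat{\theta}^{t}-\theta^*\|^2\mid\mathcal{F}_{t-1}\big]=\sum_{S\subseteq[N]}\mathcal{R}(S)\,\Big\|\tfrac{1}{|S|}\textstyle\sum_{j\in S}\theta_j^{t-1}-\theta^*\Big\|^2 ,
\]
where the sum ranges over nonempty $S$ because $\mathcal{R}(\emptyset)=0$.

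For each fixed $S$, write $\frac{1}{|S|}\sum_{j\in S}\theta_j^{t-1}-\theta^*=\frac{1}{|S|}\sum_{j\in S}(\theta_j^{t-1}-\theta^*)$. Convexity of $\|\cdot\|^2$ (Jensen with uniform weights $1/|S|$) then gives
\[
\Big\|\tfrac{1}{|S|}\textstyle\sum_{j\in S}(\theta_j^{t-1}-\theta^*)\Big\|^2\le \tfrac{1}{|S|}\textstyle\sum_{j\in S}\|\theta_j^{t-1}-\theta^*\|^2 .
\]
Substituting this bound and exchanging the finite sums over $S$ and over $i\in[N]$, by writing $\sum_{j\in S}$ as $\sum_{i\in[N]}\mathds{1}\{i\in S\}$, yields
\[
\sum_{i\in[N]}\Big(\sum_{S}\tfrac{\mathcal{R}(S)}{|S|}\mathds{1}\{i\in S\}\Big)\|\theta_i^{t-1}-\theta^*\|^2 .
\]
The inner bracket is exactly $p_i$ by the general definition~\eqref{eq:general_p}, which is the claim. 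The fixed-size case~\eqref{canonicalP} is covered automatically.

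The only delicate step is the justification of the first equality. It requires that $S^t$ be independent of $\mathcal{F}_{t-1}$, which follows from the i.i.d. RAM assumption stated with the filtration. It also requires that the aggregated vectors $\theta_j^{t-1}$ be fixed given $\mathcal{F}_{t-1}$, which holds by adaptedness. With both facts in hand, the conditional expectation reduces to integration against $\mathcal{R}$ with the iterates frozen. Everything else is the elementary Jensen step followed by a reindexing of sums, so no other obstacle is expected.
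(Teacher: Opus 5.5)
Your proposal is correct and matches the paper's proof essentially step for step. Both arguments apply Jensen's inequality to the uniform average over each realization of $S^t$, expand the conditional expectation against $\mathcal{R}$, exchange the two finite sums to identify $p_i$ via \eqref{eq:general_p}, and justify the conditioning through the independence of $S^t$ from $\mathcal{F}_{t-1}$ and the $\mathcal{F}_{t-1}$-measurability of the iterates. The only difference is that you expand the expectation over $S^t$ before applying Jensen, while the paper applies Jensen first, which changes nothing.
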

\vspace{-10pt}
\begin{proof}
Since $\|\cdot\|^2$ is convex and $\hat\theta^t-\theta^*$ is an average of the $|S^t|$ vectors $\{\theta_i^{t-1}-\theta^*\}_{i\in S^t}$, Jensen's inequality gives
\begin{align}\nonumber
\mathbb{E}\big[\|\hat{\theta}^{t} - \theta^*\|^2\mid \mathcal{F}_{t-1}\big]
&=  \mathbb{E}_{S^{t}}\Bigg[ \Big\| \tfrac{1}{|S^t|} \sum_{i \in S^{t}} \theta_i^{t-1} - \theta^* \Big\|^2  \,\Big|\, \mathcal{F}_{t-1}\Bigg] \\
&\leq  \mathbb{E}_{S^{t}} \Bigg[ \tfrac{1}{|S^t|} \sum_{i \in S^{t}} \|\theta_i^{t-1} - \theta^*\|^2 \,\Big|\, \mathcal{F}_{t-1}\Bigg]. \nonumber
\end{align}
Expanding the expectation over the sampling distribution and exchanging the order of the two finite sums,
\begin{align}\nonumber
&\mathbb{E}\big[\|\hat{\theta}^{t} - \theta^*\|^2\mid \mathcal{F}_{t-1}\big] \\ \nonumber &
\leq  \sum_{S} \mathbb{P}(S^{t} = S)\, \tfrac{1}{|S|} \sum_{i \in S} \|\theta_i^{t-1} - \theta^*\|^2 \\ \nonumber &
 = \sum_{S} \mathbb{P}(S^{t}=S)\, \tfrac{1}{|S|} \sum_{i \in [N]} \mathds{1}\{i\in S\}\, \|\theta_i^{t-1} - \theta^*\|^2\\ \nonumber &
=  \sum_{i \in [N]} \Bigg[\sum_{S}\tfrac{1}{|S|}\,\mathbb{P}(S^{t}=S)\,\mathds{1}\{i \in S\}\Bigg]  \|\theta_i^{t-1} - \theta^*\|^2\\ \nonumber
&
=  \sum_{i \in [N]} p_i\, \|\theta_i^{t-1} - \theta^*\|^2 ,
\end{align}
where the last equality is the definition~\eqref{eq:general_p} of $p_i$. Since $S^t$ is independent of $\mathcal{F}_{t-1}$ and the iterates $\theta_i^{t-1}$ are $\mathcal{F}_{t-1}$-measurable, the interchange above is justified.
\end{proof}
\vspace{-8pt}

\subsection{The Risk-Neutral Case}

We now state and prove convergence of agnostic FedAvg on the induced objective~\eqref{BatchFL}.

\begin{theorem}[\textbf{Convergence of Agnostic FedAvg}]
\label{thm:main}
Let Assumptions~\ref{assump:convex} and~\ref{assump:gradient_bound} hold. Then, with step size $\eta = \Theta\big(1/\sqrt{TH}\big)$, projected agnostic FedAvg satisfies
\[
\boxed{\;
\mathbb{E}\!\left[f\Big( \frac{1}{T} \sum_{s=1}^T \hat{\theta}^{sH} \Big) - f(\theta^*)\right] = \mathcal{O}\!\left(\frac{1}{\sqrt{T}}\right).\;}
\]
\end{theorem}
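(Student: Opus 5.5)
The plan is the standard Lyapunov-type telescoping argument on the potential $\Phi^t := \sum_{i\in[N]} p_i \|\theta_i^t-\theta^*\|^2$, evaluated across one full communication period of $H$ steps. The lemmata already established fit together as follows: Lemma~\ref{lemma:onestep} controls the local phase, Lemma~\ref{lemma:s2m} controls the global phase, and Corollary~\ref{corollary:lipschitz_difference} converts client-local function gaps into gaps evaluated at the aggregated model.

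\textbf{Step 1 (local phase).} Fix a period $s$, with all clients starting from $\hat\theta^{sH}$. Average Lemma~\ref{lemma:onestep} over $i$ with weights $p_i$ and sum over the local steps $t=sH+1,\dots,sH+H-1$. Tower expectations give
\[
\mathbb{E}\,\Phi^{(s+1)H-1}\le \mathbb{E}\|\hat\theta^{sH}-\theta^*\|^2-2\eta\sum_{t}\sum_i p_i\,\mathbb{E}\big(f_i(\theta_i^{t-1})-f_i(\theta^*)\big)+2\eta^2H(\sigma^2+G^2).
\]
Here I use that all $\theta_i^{sH}$ coincide, so $\Phi^{sH}=\|\hat\theta^{sH}-\theta^*\|^2$.

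\textbf{Step 2 (global phase).} Apply Lemma~\ref{lemma:s2m} at round $(s+1)H$ to get $\mathbb{E}\|\hat\theta^{(s+1)H}-\theta^*\|^2\le \mathbb{E}\,\Phi^{(s+1)H-1}$, which closes the recursion in the quantity $D_s:=\mathbb{E}\|\hat\theta^{sH}-\theta^*\|^2$.

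\textbf{Step 3 (replace $f_i(\theta_i^{t-1})$ by $f_i(\hat\theta^{sH})$).} Corollary~\ref{corollary:lipschitz_difference} with $j=i$, $\tau_j=sH$ gives $\mathbb{E}f_i(\theta_i^{t-1})\ge \mathbb{E}f_i(\hat\theta^{sH})-4\ell\eta GH$. Summing with the weights $p_i$ produces $f(\hat\theta^{sH})-f(\theta^*)$, counted once per local step, at an additive cost of $O(\ell\eta G H)$ per step. (Lemma~\ref{lemma:local_divergence} is stated between clients, but its proof covers the same-client case, since $\Delta_{SH}=0$ trivially.) Here I must be careful to count local steps per period. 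If the number is $H-1$ rather than $H$, I will simply absorb the difference into constants, assuming $H\ge 2$.

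\textbf{Step 4 (telescope).} Rearranging gives
\[
2\eta(H-1)\,\mathbb{E}\big[f(\hat\theta^{sH})-f(\theta^*)\big]\le D_s-D_{s+1}+2\eta^2H(\sigma^2+G^2)+8\ell G\eta^2H^2 .
\]
Sum over $s=1,\dots,T$ and use $D_1\le \mathrm{diam}(\mathcal{C})^2$ (by compactness), then divide by $2\eta(H-1)T$. Jensen's inequality for convex $f$ then bounds the gap at the averaged iterate by
\[
\frac{\mathrm{diam}(\mathcal C)^2}{2\eta HT}+O\big(\eta(\sigma^2+G^2)\big)+O(\ell G\eta H).
\]
With $\eta=c/\sqrt{TH}$, the first two terms are $O(1/\sqrt{TH})\le O(1/\sqrt T)$.

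\textbf{Main obstacle.} The drift term $\ell G\eta H$ scales as $\sqrt{H/T}$. It is $O(1/\sqrt T)$ only when $H$ is treated as a constant absorbed into the $\mathcal{O}$, which is how I read the theorem's statement. If a sharper dependence on $H$ were desired, I would instead compare $f_i(\theta_i^{t-1})$ against the period-averaged iterate. I will not pursue this, since the boxed claim is stated only in $T$.

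The other delicate point is the indexing. The returned average uses $\hat\theta^{sH}$ for $s=1,\dots,T$, while the telescoping naturally produces $\hat\theta^{sH}$ for $s=0,\dots,T-1$. I will handle this by a shift of one index, which costs an extra term of order $O(\ell\,\mathrm{diam}(\mathcal C)/T)$, or equivalently by treating the initialization $\hat\theta^0=\mathbf 0$ as a period start.
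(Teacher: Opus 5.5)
Your proposal is correct and follows essentially the same route as the paper. Both arguments unroll Lemma~\ref{lemma:onestep} over a round, close the recursion with Lemma~\ref{lemma:s2m}, pass to the aggregated model via Corollary~\ref{corollary:lipschitz_difference}, and finish by telescoping and Jensen. The differences are minor: you compare local iterates against the start-of-round model $\theta_i^{sH}=\hat\theta^{sH}$ instead of the end-of-round aggregate, and you correctly count $H-1$ local steps per round, an off-by-one in the paper's display that only affects constants when $H\ge 2$.
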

\vspace{-6pt}
\begin{proof}
Fix a global round $s\in\{1,\dots,T\}$. Summing the bound of Lemma~\ref{lemma:onestep} over the $H$ local steps of round $s$ for client $i$, taking total expectations, and using $\theta_i^{(s-1)H} = \hat{\theta}^{(s-1)H}$,
\begin{multline}\label{eq:unroll}
\mathbb{E}\big[ \|\theta_i^{sH-1} - \theta^*\|^2 \big]
\leq \mathbb{E}\big[ \|\hat{\theta}^{(s-1)H} - \theta^*\|^2 \big] \\
- 2\eta \sum_{h=0}^{H-1}\mathbb{E}\big[ f_i(\theta_i^{(s-1)H+h}) - f_i(\theta^*) \big]
\\+ 2H\eta^2 (\sigma^2 + G^2).
\end{multline}
Every local iterate $\theta_i^{(s-1)H+h}$ lies in the round-$s$ window, and $\hat{\theta}^{sH}$ is an average of end-of-round iterates of that same window; hence Corollary~\ref{corollary:lipschitz_difference} applies to each pair and gives, in conditional expectation,
\begin{multline*}   
-\,\mathbb{E}\big[f_i(\theta_i^{(s-1)H+h})\,\big|\,\mathcal{F}_{(s-1)H}\big] \leq\\ -\,\mathbb{E}\big[f_i(\hat{\theta}^{sH})\,\big|\,\mathcal{F}_{(s-1)H}\big] + 4\ell\eta G H .\nonumber
\end{multline*}
Substituting into~\eqref{eq:unroll} and taking total expectations,
\begin{multline}\label{eq:unroll2}
\mathbb{E}\big[ \|\theta_i^{sH-1} - \theta^*\|^2 \big]
\leq \mathbb{E}\big[ \|\hat{\theta}^{(s-1)H} - \theta^*\|^2 \big] \\
- 2\eta H\, \mathbb{E}\big[ f_i(\hat{\theta}^{sH}) - f_i(\theta^*) \big]
\\+ 2H\eta^2 (\sigma^2 + G^2) + 8\ell\eta^2 G H^2 .
\end{multline}
Lemma~\ref{lemma:s2m} at $t=sH$, together with the tower property, gives
\begin{equation}\label{GTL2}
\mathbb{E}\big[\|\hat{\theta}^{sH} - \theta^*\|^2\big] \leq \sum_{i\in[N]} p_i\, \mathbb{E}\big[\| \theta_i^{sH-1} - \theta^* \|^2 \big] .
\end{equation}
Weighting~\eqref{eq:unroll2} by $p_i$, summing over $i\in[N]$, using $\sum_i p_i f_i = f$ from~\eqref{EmpricialRiskNeutralFL}, and combining with~\eqref{GTL2},
\begin{multline}
\mathbb{E}\big[ \|\hat{\theta}^{sH} - \theta^*\|^2 \big]
\leq \mathbb{E}\big[ \|\hat{\theta}^{(s-1)H} - \theta^*\|^2 \big] \\
- 2\eta H\, \mathbb{E}\big[ f(\hat{\theta}^{sH}) - f(\theta^*) \big]
\\+ 2H\eta^2 (\sigma^2 + G^2) + 8\ell\eta^2 G H^2 .
\end{multline}
Dividing by $2\eta H$ and rearranging,
\begin{multline}
\mathbb{E}\big[ f(\hat{\theta}^{sH}) - f(\theta^*) \big] \leq
\frac{\mathbb{E}\big[\|\hat{\theta}^{(s-1)H} - \theta^*\|^2\big]-\mathbb{E}\big[\|\hat{\theta}^{sH} - \theta^*\|^2\big]}{2\eta H} \\
+ \eta(\sigma^2 + G^2) + 4\ell\eta G H .
\end{multline}
Summing over $s = 1,\dots,T$ telescopes the first term and discards a nonnegative quantity; dividing by $T$ and applying Jensen's inequality to the convex function $f$ gives
\begin{multline}\label{eq:rate}
\mathbb{E}\!\left[ f\!\Big( \tfrac{1}{T} \sum_{s = 1}^T \hat{\theta}^{s H} \Big) - f(\theta^*) \right]
\\\leq \frac{\|\hat{\theta}^{0} - \theta^*\|^2}{2\eta H T} +  \eta (\sigma^2 + G^2) + 4 \ell \eta G H .
\end{multline}
Setting $D^2:=\|\hat{\theta}^{0} - \theta^*\|^2$ and $\eta = c/\sqrt{TH}$ for a constant $c>0$,
\begin{multline}
\mathbb{E}\!\left[ f\!\Big( \tfrac{1}{T}\textstyle\sum_{s=1}^T \hat{\theta}^{sH} \Big) - f(\theta^*) \right]
\\\leq \frac{D^2}{2c\sqrt{TH}} + \frac{c(\sigma^2 + G^2)}{\sqrt{TH}} + 4c\,\ell G \sqrt{\tfrac{H}{T}} ,
\end{multline}
and for fixed $H$ the dominant term is of order $1/\sqrt{T}$, which is the claim.
\end{proof}

Theorem~\ref{thm:main} establishes an $\mathcal{O}(1/\sqrt{T})$ rate in the number of communication rounds and is, to the best of our knowledge, the first convergence guarantee for convex federated optimization under \emph{unknown} client availability, here in the general variable-size model~\eqref{eq:general_p}. As the proof shows, the participation law enters only through Lemma~\ref{lemma:s2m}, which was established directly from~\eqref{eq:general_p}; the argument therefore covers variable-size random access without modification. Faster rates, of order $1/T$, are available for strongly convex objectives under known participation distributions, but are not applicable in the present setting. We also note that the bound~\eqref{eq:rate} contains no residual term that grows with $T$, in contrast with several earlier analyses of local-update methods. Section~\ref{sec:experiments} examines this behavior empirically under both uniform and skewed availability.

\vspace{-4pt}
\subsection{Extension to \textsc{FedeRage}}

We now transfer the guarantee to the mean--CVaR objective $\fag$ of~\eqref{eq:fag}. Define
\[
\kappa(\alpha,\gamma):= (1-\gamma)+\frac{\gamma}{\alpha},\qquad \gamma\in[0,1],\ \alpha\in(0,1],
\]
and note that $\kappa(\alpha,0)=\kappa(1,\gamma)=1$ and $\kappa(\alpha,\gamma)\ge 1$ throughout.

\begin{theorem}[\textbf{Convergence of \textsc{FedeRage}}]
\label{thm:federage}
Let Assumptions~\ref{assump:convex} and~\ref{assump:gradient_bound} hold, and let $\mathcal{C}\subseteq\Theta$ and $\mathcal{B}\subset\mathbb{R}$ be convex and compact, with projections $\Pi_\mathcal{C}$ and $\Pi_\mathcal{B}$. Consider projected agnostic FedAvg applied to $\fag$ over $\mathcal{C}\times\mathcal{B}$, with global checkpoints $(\hat{\theta}^{sH},\hat{\beta}^{sH})$ as in Algorithm~\ref{alg:federage}, and let $(\theta^*,\beta^*)\in\operatorname*{argmin}_{(\theta,\beta)\in\mathcal{C}\times\mathcal{B}} \fag(\theta,\beta)$. Then, with $\eta=\Theta\big(1/(\kappa(\alpha,\gamma)\sqrt{TH})\big)$,
\begin{multline}
    \mathbb{E}\!\left[
\fag\Big(\tfrac{1}{T}\textstyle\sum_{s=1}^T\hat{\theta}^{sH},\ \tfrac{1}{T}\textstyle\sum_{s=1}^T\hat{\beta}^{sH}\Big)
- \fag(\theta^*,\beta^*)
\right]
\\= \mathcal{O}\!\left(\frac{(1-\gamma)+\gamma/\alpha}{\sqrt{T}}\right).
\end{multline}
\end{theorem}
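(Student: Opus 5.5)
The plan is to reduce Theorem~\ref{thm:federage} to the proof of Theorem~\ref{thm:main}, run on the augmented variable $z=(\theta,\beta)\in\mathcal{C}\times\mathcal{B}$ with the augmented per-client objective
\[
F_i(\theta,\beta):=\mathbb{E}_{\xi_i}\big[g_{\alpha,\gamma}(\theta,\beta;\xi_i)\big],\qquad \textstyle\sum_i p_iF_i=\fag ,
\]
which holds by linearity, since $\mathcal{Q}^b=\sum_i p_i\mathcal{Q}_i^b$. The proof of Theorem~\ref{thm:main} used only four ingredients: convexity of the per-sample objective, unbiasedness of the stochastic subgradient, a second-moment bound $G^2$, and Lipschitz continuity of $f_i$ on the compact feasible set. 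The participation law entered only through Lemma~\ref{lemma:s2m}, which is a Jensen argument valid verbatim for the stacked vector $(\theta,\beta)$. So I will re-verify these four ingredients for $g_{\alpha,\gamma}$ and track the constants.

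First, convexity. For fixed $\xi$, the map $(\theta,\beta)\mapsto \beta+\frac1\alpha(f(\theta;\xi)-\beta)_+$ is the composition of the convex nondecreasing function $(\cdot)_+$ with a jointly convex affine-plus-convex argument, plus a linear term. Hence $g_{\alpha,\gamma}$ is jointly convex for $\gamma\in[0,1]$, and this gives Assumption~\ref{assump:convex} in $z$.

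Second, the subgradient. The direction used in Algorithm~\ref{alg:federage} is $\big(w\nabla_\theta f(\theta;\xi),\,1-w\big)$ with $w\in[1-\gamma,\kappa]$. This is a valid measurable selection from $\partial g_{\alpha,\gamma}$, using the convention that the indicator at the kink selects $1$. Taking the $\mathbb{E}_{\xi_i}$ of the selection defines a subgradient of $F_i$, so unbiasedness holds by construction.

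Third, the second moment. Since $|w|\le\kappa$ and $|1-w|\le \gamma/\alpha\le\kappa$,
\[
\mathbb{E}\big\|\big(w\nabla f,\,1-w\big)\big\|^2\le \kappa^2G^2+\kappa^2=:\kappa^2\tilde G^2 .
\]
Fourth, Lipschitz continuity. $F_i$ is Lipschitz on $\mathcal{C}\times\mathcal{B}$ with constant at most $\kappa\,\ell+\kappa=\kappa\tilde\ell$, because $(\cdot)_+$ is $1$-Lipschitz.

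With these substitutions ($G\to\kappa\tilde G$, $\sigma^2\le\kappa^2\tilde G^2$, $\ell\to\kappa\tilde\ell$), Lemmas~\ref{lemma:onestep} and~\ref{lemma:local_divergence}, Corollary~\ref{corollary:lipschitz_difference}, and Lemma~\ref{lemma:s2m} carry over unchanged, and the chain \eqref{eq:unroll}--\eqref{eq:rate} gives
\[
\mathbb{E}\big[\fag(\bar\theta,\bar\beta)-\fag(\theta^*,\beta^*)\big]\le \frac{D_z^2}{2\eta HT}+4\eta\kappa^2\tilde G^2+4\eta\kappa^2\tilde\ell\tilde G H ,
\]
where $D_z$ is the diameter of $\mathcal{C}\times\mathcal{B}$. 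Plugging in $\eta=c/(\kappa\sqrt{TH})$ makes each term of order $\kappa/\sqrt{T}$ for fixed $H$, which is the claimed rate $\mathcal{O}\big(((1-\gamma)+\gamma/\alpha)/\sqrt T\big)$.

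\textbf{Main obstacle.} Algorithm~\ref{alg:federage} uses two step sizes, $\eta_\theta$ and $\eta_\beta$, while the statement has a single $\eta$. The quickest fix is to take $\eta_\theta=\eta_\beta=\eta$. If they must differ, I would rescale $\beta$ by $\sqrt{\eta_\theta/\eta_\beta}$ so that the update becomes a single-step-size projected subgradient step in a weighted norm. The constants then change but the rate in $\kappa$ does not. A secondary point to check is that Lemma~\ref{lemma:local_divergence}'s pathwise step bound $\|\theta_i^{\tau_i}-\theta_i^{\tau_j}\|\le\sum\eta\|g\|$ still holds, since it now involves $\kappa$-scaled gradients.
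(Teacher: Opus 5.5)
Your proposal is correct and follows the paper's proof essentially step for step: joint convexity of $g_{\alpha,\gamma}$, the selection $(w\nabla_\theta f,\,1-w)$ matching Algorithm~\ref{alg:federage}, $\kappa$-inflated constants, and a verbatim rerun of Lemmata~\ref{lemma:onestep}--\ref{lemma:s2m} and the chain of Theorem~\ref{thm:main} on the stacked iterate $(\theta,\beta)$ with $\eta\propto 1/(\kappa\sqrt{TH})$. Your bookkeeping is, if anything, more careful than the paper's: you bound $\sigma^2_{\alpha,\gamma}$ by the second moment rather than by $\kappa^2\sigma^2$ (the latter ignores the randomness of $w$), you charge the $\beta$-coordinate $\kappa$ rather than $1$ in the Lipschitz constant, and you handle $\eta_\theta\neq\eta_\beta$, which the paper silently equates; the one small caveat is that ``$(\cdot)_+$ is $1$-Lipschitz'' yields $\kappa\ell+\kappa$ only if $\ell$ is a common Lipschitz constant of the mini-batch losses $f(\cdot;\xi)$ rather than of $f_i$, and otherwise you can use the subgradient-norm bound $\kappa\sqrt{G^2+1}$ directly.
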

\vspace{-6pt}
\begin{proof}
Write $u=(\theta,\beta)\in\mathcal{C}\times\mathcal{B}$ and
\[
g_{\alpha,\gamma}(\theta, \beta;\xi)=(1-\gamma)f(\theta;\xi)+\gamma\Big[\beta+\tfrac{1}{\alpha}\big(f(\theta;\xi)-\beta\big)_+\Big],
\]
so that $\fag(u)=\mathbb{E}_{\xi\sim\mathcal{Q}^b}[g_{\alpha,\gamma}(u;\xi)]$ by~\eqref{eq:fag}.

\emph{(i) Convexity.} The map $(\theta,\beta)\mapsto f(\theta;\xi)-\beta$ is jointly convex by Assumption~\ref{assump:convex}, and $(\cdot)_+$ is convex and nondecreasing, so the composition is jointly convex; adding the affine term $\beta$ and the convex term $(1-\gamma)f(\theta;\xi)$ with nonnegative coefficients preserves convexity. Hence $g_{\alpha,\gamma}(\cdot\,;\xi)$ is convex on $\mathcal{C}\times\mathcal{B}$ for every $\xi$, and so is $\fag$ as a convex combination of such functions.

\emph{(ii) Subgradients and the algorithm.} A measurable subgradient selection of $g_{\alpha,\gamma}$ is
\[
\partial_\theta g_{\alpha,\gamma} = w\,\nabla_\theta f(\theta;\xi),\qquad
\partial_\beta G_{\alpha,\gamma} = 1-w,
\]
with $w=\omega(\theta, \beta; \xi)\in[1-\gamma,\kappa]$, which is exactly the pair of directions used in Lines~8--9 of Algorithm~\ref{alg:federage}. Consequently the local step of Algorithm~\ref{alg:federage} is a projected stochastic subgradient step on $\fag$ over $\mathcal{C}\times\mathcal{B}$, conditionally unbiased by Assumption~\ref{assump:gradient_bound} and the independence of $\xi_i^t$ from $\mathcal{F}_{t-1}$.

\emph{(iii) Constants.} Since $0\le w\le\kappa$ and $|1-w|\le\kappa$, the Lipschitz, second-moment, and variance constants of Assumption~\ref{assump:gradient_bound} and Lemma~\ref{lemma:variance} transfer to $\fag$ as
\[
\ell_{\alpha,\gamma}\le\kappa\,\ell+1,\quad
G^2_{\alpha,\gamma}\le \kappa^2(G^2+1),\quad
\sigma^2_{\alpha,\gamma}\le\kappa^2\sigma^2 ,
\]
the additive constants arising from the $\beta$-coordinate, whose subgradient is bounded by $\kappa$ independently of the data. In particular $\ell_{\alpha,\gamma}=\mathcal{O}(\kappa)$, $G_{\alpha,\gamma}=\mathcal{O}(\kappa)$, and $\sigma^2_{\alpha,\gamma}=\mathcal{O}(\kappa^2)$.

\emph{(iv) Transfer of the analysis.} Lemmata~\ref{lemma:onestep}--\ref{lemma:s2m} use only convexity, non-expansiveness of the projection onto a convex compact set, and the three constants above; moreover the server averages the pairs $(\theta_i,\beta_i)$ jointly with the same weights $1/|S^t|$, so Lemma~\ref{lemma:s2m} applies verbatim to the product iterate on $\mathcal{C}\times\mathcal{B}$, whose projection is the Cartesian product $\Pi_\mathcal{C}\times\Pi_\mathcal{B}$. Repeating the proof of Theorem~\ref{thm:main} with $(\ell,G,\sigma^2)$ replaced by $(\ell_{\alpha,\gamma},G_{\alpha,\gamma},\sigma^2_{\alpha,\gamma})$ therefore yields, for every $\eta>0$,
\begin{align*}
&\mathbb{E}\big[ \fag(\bar{\theta},\bar{\beta})-\fag(\theta^*,\beta^*) \big] \\
&\qquad\le
\frac{D^2}{2\eta TH} + \eta\big(\sigma^2_{\alpha,\gamma}+G_{\alpha,\gamma}^2\big) + 4\ell_{\alpha,\gamma}\,\eta\, G_{\alpha,\gamma} H ,
\end{align*}
where $(\bar{\theta},\bar{\beta})$ denotes the average of the $T$ global checkpoints and $D:=\sup_{u,v\in\mathcal{C}\times\mathcal{B}}\|u-v\|<\infty$. Minimizing the right-hand side over $\eta$ gives $\eta^\star=\Theta\big(1/(\kappa\sqrt{TH})\big)$ and, for fixed $H$, the stated bound of order $\kappa(\alpha,\gamma)/\sqrt{T}$.
\end{proof}

\begin{remark}[\textbf{The Price of Risk Aversion}]\label{rem:price}
Theorem~\ref{thm:federage} makes the cost of robustness explicit. Setting $\gamma=0$ gives $\kappa=1$ and recovers Theorem~\ref{thm:main} exactly, whereas $\gamma=1$ with $\alpha\to0^+$ inflates the bound by $1/\alpha$, consistent with the degeneration of the mean--CVaR objective to an essential supremum in that limit. Moderate values, $\gamma\in[0.1,0.5]$ and $\alpha\in[0.01,0.1]$, suffice to realize the empirical gains reported in Section~\ref{sec:experiments}. We note also that $\kappa$ multiplies an upper \emph{bound} and does not by itself imply slower observed convergence, since the risk-averse objective simultaneously reshapes the optimization landscape.
\end{remark}

\begin{figure*}[t]
    \centering
    \begin{minipage}[b]{0.32\textwidth}
        \centering
        \includegraphics[width=\textwidth]{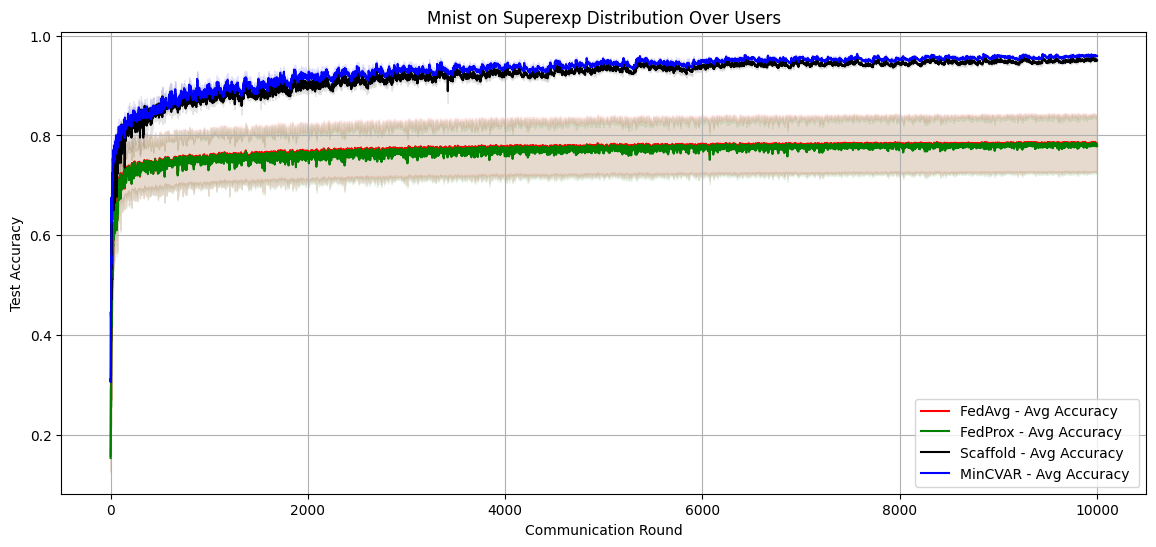}
        \caption*{(a) MNIST (non-uniform)}
    \end{minipage}
    \begin{minipage}[b]{0.32\textwidth}
        \centering
        \includegraphics[width=\textwidth]{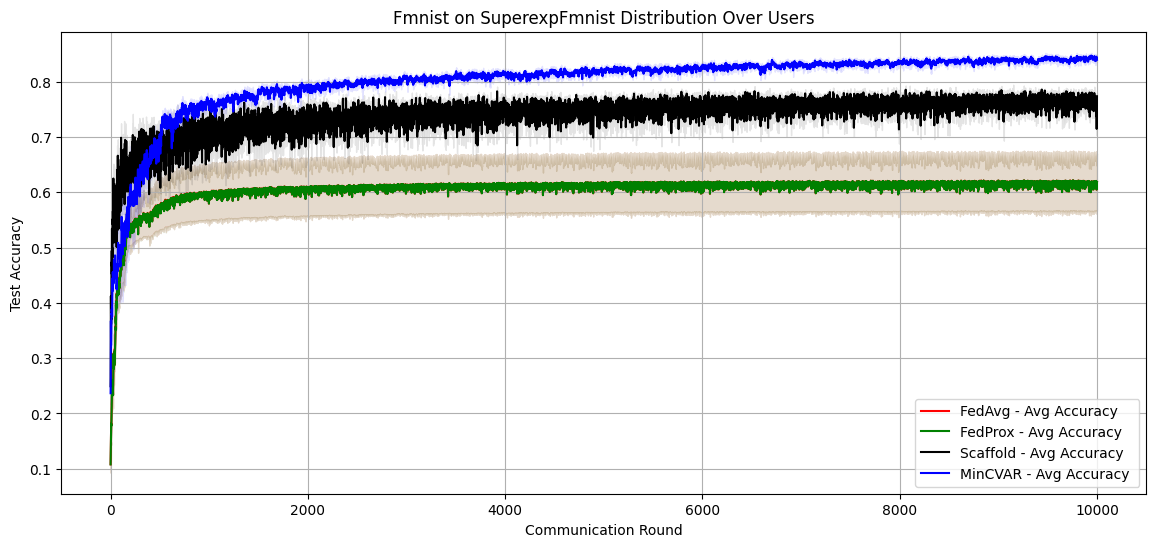}
        \caption*{(b) FashionMNIST (non-uniform)}
    \end{minipage}
    \begin{minipage}[b]{0.32\textwidth}
        \centering
        \includegraphics[width=\textwidth]{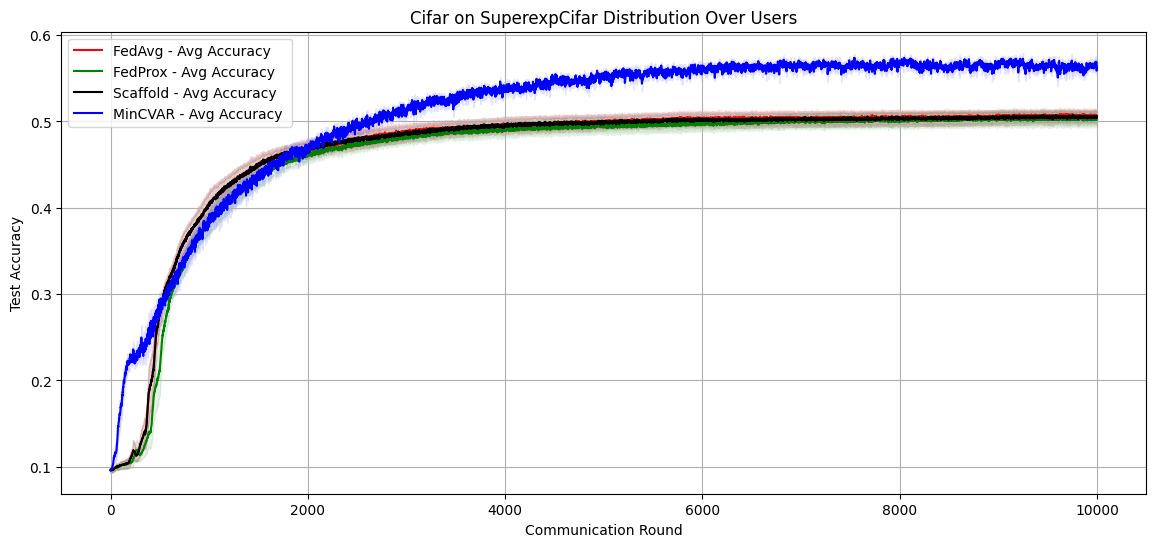}
        \caption*{(c) CIFAR-10 (non-uniform)}
    \end{minipage}
    \caption{Test accuracy over communication rounds under non-uniform availability on MNIST, FashionMNIST, and CIFAR-10.}
    \label{fig:non_uniform_results}
    \vspace{-10pt}
\end{figure*}

\section{Experiments}\label{sec:experiments}
\begin{figure}[t!]
    \centering
    \includegraphics[width=1\linewidth]{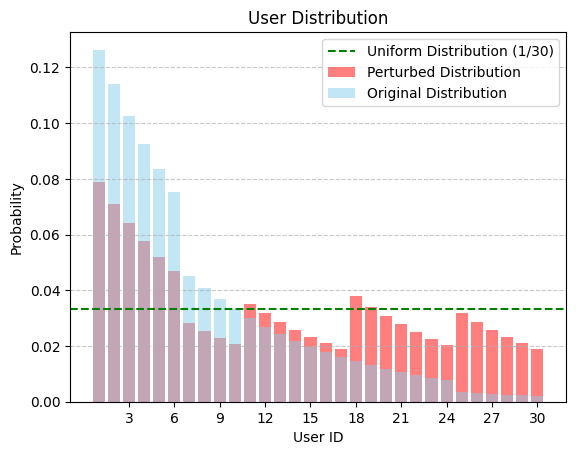}
    \caption{Participation probabilities $\{p_i\}$ for $M=3$,
    $N=30$.}
    \label{fig:user_probs}
    \vspace{-12pt}
\end{figure}
We evaluate the proposed algorithm on three standard image-classification benchmarks---\emph{MNIST}, \emph{FashionMNIST}, and \emph{CIFAR-10}---with the aim of assessing robustness to the two fundamental sources of difficulty in FL: restricted \emph{client availability} and \emph{statistical heterogeneity}.

Each dataset is partitioned across $30$ clients, with each client receiving data from at most two classes, which induces a strongly non-IID distribution and a prototypical heterogeneous FL setting. Two availability regimes are considered. Under \emph{uniform availability}, every client is equally likely to be selected in each round, as is commonly assumed in the FL literature. Under \emph{skewed availability}, participation is concentrated: most clients are selected in the majority of rounds, while a small subset---here the three clients holding the rarest classes---participate only sporadically, with per-round probabilities as low as those reported in Fig.~\ref{fig:user_probs}. This models deployments in which availability is sparse and imbalanced. In all runs the RAM selects $M=3$ of the $N=30$ clients per round.

We compare \textsc{FedeRage} against plain FedAvg and two widely adopted heterogeneity-aware baselines. \textsc{FedProx}~\cite{FedProx} augments the local objectives with a proximal term that mitigates client drift and stabilizes convergence. SCAFFOLD~\cite{SCAFFOLD} employs control variates, i.e., auxiliary parameters that correct client updates and improve the alignment of local and global gradients. Transport-based aggregation~\cite{RahimiKalogerias2026FedAVOT} is not included as a baseline, as it requires server-side knowledge of the availability law and is therefore inapplicable in the agnostic regime studied here.

In the \emph{uniform} regime, in which statistical heterogeneity is the dominant source of variance, \textsc{FedeRage} performs on par with or better than SCAFFOLD. On the most demanding benchmark, CIFAR-10, it outperforms SCAFFOLD notwithstanding the latter's more elaborate update mechanism and larger memory footprint. On MNIST and FashionMNIST, SCAFFOLD attains a marginally higher final mean accuracy, but with markedly larger dispersion across runs, so that \textsc{FedeRage} is preferable once mean and dispersion are read jointly (Table~\ref{tab:results} and Fig.~\ref{fig:non_uniform_results}); this indicates more stable and more predictable convergence. \textsc{FedeRage} also reaches competitive accuracy within fewer communication rounds.

In the more realistic \emph{skewed} regime, in which client drift and selection bias act simultaneously, the advantage of \textsc{FedeRage} is more pronounced. It outperforms all baselines on the three benchmarks, with a margin that widens on the more complex datasets (Table~\ref{tab:results} and Fig.~\ref{fig:non_uniform_results}). As shown in Fig.~\ref{fig:combined_results}, the accuracy attained for the least frequently selected clients remains high, which is an essential property when availability is irregular and unpredictable.

We further examine \emph{fairness} by evaluating the global model on each client's local test data over the final ten communication rounds. Fig.~\ref{fig:combined_results} indicates that SCAFFOLD, while adequate on average, underperforms substantially on the three least active clients, whereas \textsc{FedeRage} delivers uniformly high accuracy across the client population---a property of interest in fairness-sensitive deployments.

With respect to \emph{convergence}, Theorem~\ref{thm:federage} guarantees a rate of order $1/\sqrt{T}$ in the number of communication rounds, up to the risk-dependent factor $\kappa(\alpha,\gamma)$ discussed in Remark~\ref{rem:price}. Empirically, we observe a consistent acceleration across datasets, in that the risk-weighted updates of Algorithm~\ref{alg:federage} reach a given accuracy level in fewer rounds (Fig.~\ref{fig:non_uniform_results}).

With respect to \emph{computation and communication}, SCAFFOLD maintains and transmits a control variate of the same dimension $d'$ as the model, approximately doubling the per-round communication, whereas \textsc{FedeRage} introduces a single scalar $\beta$ per client. Beyond the reduction in memory and bandwidth, this also reduces what each client discloses, as discussed in Section~\ref{ProposedApproach}. Under equal communication budgets, the saving may be reallocated to larger models or more frequent updates at no additional cost.

In summary, the experiments indicate that \textsc{FedeRage} attains (i) higher accuracy, (ii) improved per-client fairness, (iii) faster convergence, and (iv) lower overhead across the scenarios considered, which, together with the guarantees of Section~\ref{Convergence}, supports its practicality under restricted and unknown participation.

\begin{figure*}[!t]
  \hspace{-2pt}
    \includegraphics[height=2.85in]{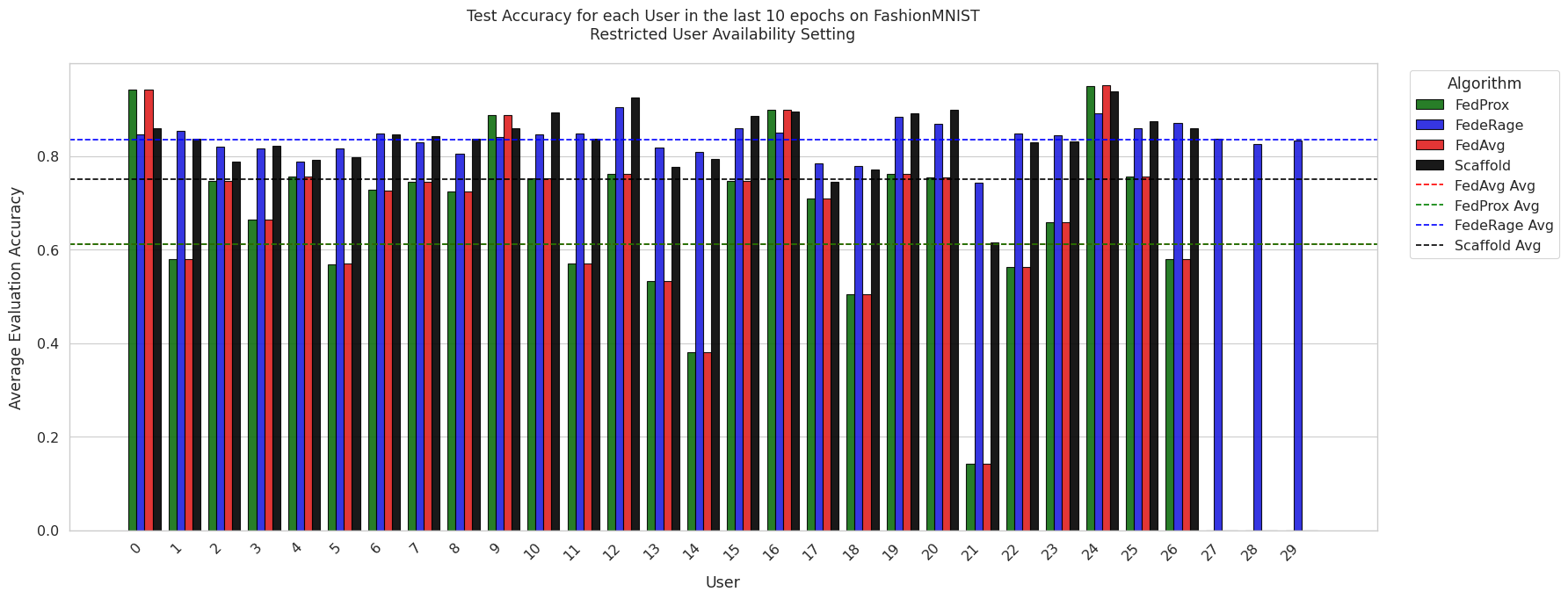}
  \vspace{-12pt}
  \caption{Restricted client availability on FashionMNIST (dashed lines mark each mean across clients for each algorithm). Under the
    non-uniform participation profile of \ref{fig:user_probs},
    \textsc{FedeRage} remains accurate even for the three most infrequently
    available clients (IDs~27--29), which the baselines effectively never
    sample.}\label{fig:combined_results}
  \vspace{-12pt}
\end{figure*}

\vspace{-4pt}
\subsection{Implementation Details and Ablations}\label{sec:impl}

All algorithms were tuned by hyperparameter sweeps on a dedicated validation set, and all reported results are averaged over five independent runs with distinct random seeds. The sweeps cover the step sizes $\eta_\theta$ and $\eta_\beta$, the number of local steps $H$, the proximal coefficient of \textsc{FedProx}, and the risk parameters $(\alpha,\gamma)$ of \textsc{FedeRage}. The selected values, the model architectures and data partitions used for each benchmark, the sensitivity of \textsc{FedeRage} to $(\alpha,\gamma)$, and per-client accuracies for the uniform-availability regime are reported in the accompanying supplementary material, together with the code required to reproduce every figure and table.

\section{Conclusion and Future Work}\label{Conclusion}

We have introduced a framework for federated learning under restricted, imbalanced, and unknown client availability. We first characterized and analyzed agnostic FedAvg under a general variable-size random-access model, establishing an $\mathcal{O}(1/\sqrt{T})$ rate for convex, possibly nonsmooth losses; we then constructed on that basis a distributionally robust, CVaR-based algorithm, \textsc{FedeRage}, and proved a matching $\mathcal{O}(\kappa(\alpha,\gamma)/\sqrt{T})$ guarantee in which the price of risk aversion is explicit. Experiments on standard benchmarks, under both uniform and highly imbalanced availability, indicate improved accuracy and stability relative to state-of-the-art methods, with the largest gains where participation is most skewed.

\vspace{0.5em}
\noindent\textbf{Future Directions---}Those are as follows:
\begin{itemize}[leftmargin=.35cm]
    \item \emph{Adaptive risk parameters.} Selecting or adapting $(\alpha,\gamma)$ automatically during training could improve robustness and generalization further.
    \item \emph{Multi-CVaR formulations.} Imposing several CVaR constraints/components, each regulating a different aspect of the objective, would afford finer control over client-level fairness and performance.
    \item \emph{Hybrid alignment and hedging.} When the availability law is partially observable, transport-based alignment and the CVaR-based hedging developed here compose naturally: one may align against an \emph{estimated} availability distribution and rely on a risk-averse local objective to absorb the residual estimation error. Characterizing the resulting bias--robustness trade-off is an appealing direction.
    \item \emph{Broader evaluation.} Extending the evaluation to further domains, such as natural language processing or deployed federated systems, would strengthen the empirical case.
    \item \emph{Fairness in FL.} The experiments show that fairness, modeled here through the minimum per-client accuracy, is improved; a formal treatment, together with a comparison to the minimax notion of~\cite{MohriAFL}, remains open.
    \item \emph{Nonconvex regimes.} Extending both the agnostic and the risk-averse analyses beyond convexity, e.g., to stationarity guarantees for deep models, is an important next step.
\end{itemize}

\bibliographystyle{IEEEtran}
\bibliography{references}

@INPROCEEDINGS{FedProx,
  author       = {Li, Tian and Sahu, Anit Kumar and Zaheer, Manzil and
                  Sanjabi, Maziar and Talwalkar, Ameet and Smith, Virginia},
  title        = {Federated Optimization in Heterogeneous Networks},
  booktitle    = {Proc. Mach. Learn. Syst. (MLSys)},
  volume       = {2},
  pages        = {429--450},
  year         = {2020},
  url          = {https://proceedings.mlsys.org/paper_files/paper/2020/hash/1f5fe83998a09396ebe6477d9475ba0c-Abstract.html}
}

@INPROCEEDINGS{MohriAFL,
  author       = {Mohri, Mehryar and Sivek, Gary and Suresh, Ananda Theertha},
  title        = {Agnostic Federated Learning},
  booktitle    = {Proc. Int. Conf. Mach. Learn. (ICML)},
  series       = {PMLR},
  volume       = {97},
  pages        = {4615--4625},
  year         = {2019},
  url          = {https://proceedings.mlr.press/v97/mohri19a.html}
}

@ARTICLE{RockafellarUryasev2000,
  author       = {Rockafellar, R. Tyrrell and Uryasev, Stanislav},
  title        = {Optimization of Conditional Value-at-Risk},
  journal      = {Journal of Risk},
  volume       = {2},
  number       = {3},
  pages        = {21--41},
  year         = {2000}
}

@INPROCEEDINGS{ReddiAdaptiveFO,
  author       = {Reddi, Sashank J. and Charles, Zachary and Zaheer, Manzil and
                  Garrett, Zachary and Rush, Keith and Kone{\v{c}}n{\'y}, Jakub and
                  Kumar, Sanjiv and McMahan, H. Brendan},
  title        = {Adaptive Federated Optimization},
  booktitle    = {Proc. Int. Conf. Learn. Representations (ICLR)},
  year         = {2021},
  url          = {https://openreview.net/forum?id=LkFG3lB13U5}
}

@ARTICLE{Kairouz2021,
  author       = {Kairouz, Peter and McMahan, H. Brendan and Avent, Brendan and others},
  title        = {Advances and Open Problems in Federated Learning},
  journal      = {Foundations and Trends in Machine Learning},
  volume       = {14},
  number       = {1--2},
  pages        = {1--210},
  year         = {2021},
  doi          = {10.1561/2200000083}
}

@INPROCEEDINGS{ChoWangJoshi2022,
  author       = {Cho, Yae Jee and Wang, Jianyu and Joshi, Gauri},
  title        = {Towards Understanding Biased Client Selection in Federated Learning},
  booktitle    = {Proc. Int. Conf. Artif. Intell. Statist. (AISTATS)},
  series       = {PMLR},
  volume       = {151},
  pages        = {10351--10375},
  year         = {2022},
  url          = {https://proceedings.mlr.press/v151/jee-cho22a.html}
}

@INPROCEEDINGS{LevyCarmonDuchiSidford2020,
  author       = {Levy, Daniel and Carmon, Yair and Duchi, John C. and Sidford, Aaron},
  title        = {Large-Scale Methods for Distributionally Robust Optimization},
  booktitle    = {Advances in Neural Information Processing Systems (NeurIPS)},
  volume       = {33},
  pages        = {8847--8860},
  year         = {2020}
}

@ARTICLE{Ribero2023,
  author       = {Ribero, M{\'o}nica and Vikalo, Haris and de Veciana, Gustavo},
  title        = {Federated Learning Under Intermittent Client Availability and
                  Time-Varying Communication Constraints},
  journal      = {IEEE Journal of Selected Topics in Signal Processing},
  volume       = {17},
  number       = {1},
  pages        = {98--111},
  year         = {2023},
  doi          = {10.1109/JSTSP.2022.3224590}
}

@article{FedNonIID,
  author  = {Zhao, Y. and Li, M. and Lai, L. and Suda, N. and Civin, D. and Chandra, V.},
  title   = {Federated Learning with Non-IID Data},
  journal = {arXiv preprint arXiv:1806.00582},
  year    = {2018},
  url     = {https://arxiv.org/abs/1806.00582}
}

@book{shapiro2009lectures,
  author    = {Shapiro, Alexander and Dentcheva, Darinka and Ruszczy{\'n}ski, Andrzej},
  title     = {Lectures on Stochastic Programming: Modeling and Theory},
  publisher = {SIAM and Mathematical Optimization Society},
  address   = {Philadelphia, PA},
  year      = {2009},
  doi       = {10.1137/1.9780898718751}
}

@inproceedings{SCAFFOLD,
  author    = {Karimireddy, Sai Praneeth and Kale, Satyen and Mohri, Mehryar and Reddi, Sashank J. and Stich, Sebastian U. and Suresh, Ananda Theertha},
  title     = {SCAFFOLD: Stochastic Controlled Averaging for Federated Learning},
  booktitle = {Proceedings of the International Conference on Machine Learning (ICML)},
  volume    = {119},
  pages     = {5132--5143},
  year      = {2020},
  url       = {https://proceedings.mlr.press/v119/karimireddy20a.html}
}

@inproceedings{FedNova,
  author    = {Wang, Jianyu and Liu, Qiang and Liang, Han and Joshi, Gauri and Poor, H. Vincent},
  title     = {Tackling the Objective Inconsistency Problem in Heterogeneous Federated Optimization},
  booktitle = {Advances in Neural Information Processing Systems (NeurIPS)},
  volume    = {33},
  pages     = {7611--7623},
  year      = {2020},
  url       = {https://proceedings.neurips.cc/paper_files/paper/2020/file/564127c03caaab942e503ee6f810f54d-Paper.pdf}
}

@inproceedings{AnarchicFL,
  author    = {Yang, Hong and Zhang, Xiaoxi and Khanduri, Poojan and Liu, Jia},
  title     = {Anarchic Federated Learning},
  booktitle = {Proceedings of the International Conference on Machine Learning (ICML)},
  volume    = {162},
  pages     = {25331--25363},
  year      = {2022},
  url       = {https://proceedings.mlr.press/v162/yang22r.html}
}

@article{Pillutla_2023,
  author  = {Pillutla, Krishna and Laguel, Youssef and Malick, J{\'e}r{\^o}me and Harchaoui, Zaid},
  title   = {Federated Learning with Superquantile Aggregation for Heterogeneous Data},
  journal = {Machine Learning},
  volume  = {113},
  number  = {5},
  pages   = {2955--3022},
  year    = {2023},
  doi     = {10.1007/s10994-023-06332-x},
  url     = {https://doi.org/10.1007/s10994-023-06332-x}
}

@inproceedings{FedDisco,
  author    = {Ye, Rui and Xu, Ming and Wang, Jianyu and Xu, Chao and Chen, Shixiang and Wang, Yong},
  title     = {FedDisco: Federated Learning with Discrepancy-Aware Collaboration},
  booktitle = {Proceedings of the International Conference on Machine Learning (ICML)},
  volume    = {202},
  pages     = {39879--39902},
  year      = {2023},
  url       = {https://proceedings.mlr.press/v202/ye23f.html}
}

@inproceedings{theodoropoulos2023ram,
  author    = {Theodoropoulos, Pantelis and Nikolakakis, Konstantinos E. and
               Kalogerias, Dionysis},
  title     = {Federated Learning under Restricted User Availability},
  booktitle = {Proc. IEEE Int. Conf. Acoust., Speech, Signal Process. (ICASSP)},
  year      = {2024},
  note      = {arXiv:2309.14176}
}

@inproceedings{RahimiKalogerias2025Agnostic,
  author    = {Rahimi, Herlock and Kalogerias, Dionysis},
  title     = {Convergence of Agnostic Federated Averaging},
  booktitle = {Proc. IEEE Int. Workshop Comput. Adv. Multi-Sensor Adaptive
               Process. (CAMSAP)},
  year      = {2025},
  pages     = {1--5}
}

@inproceedings{RahimiKalogerias2026FedAVOT,
  author    = {Rahimi, Herlock and Kalogerias, Dionysis},
  title     = {{FedAVOT}: Exact Distribution Alignment in Federated Learning
               via Masked Optimal Transport},
  booktitle = {Proc. IEEE Int. Conf. Acoust., Speech, Signal Process. (ICASSP)},
  year      = {2026},
  note      = {To appear}
}

@article{FieldGuideFedOpt,
  author  = {Wang, Jianyu and Liu, Qiang and Liang, Han and Joshi, Gauri and Poor, H. Vincent and Sahu, Anit Kumar and Stich, Sebastian and Wang, Tianyi and Kairouz, Peter and Suresh, Ananda Theertha and McMahan, H. Brendan},
  title   = {A Field Guide to Federated Optimization},
  journal = {arXiv preprint arXiv:2107.06917},
  year    = {2021},
  url     = {https://arxiv.org/abs/2107.06917}
}

@inproceedings{SemiCyclicSGD,
  author    = {Eichner, Hubert and Koren, Tomer and McMahan, Brendan and Srebro, Nathan and Talwar, Kunal},
  title     = {Semi-Cyclic Stochastic Gradient Descent},
  booktitle = {Proceedings of the International Conference on Machine Learning (ICML)},
  volume    = {97},
  pages     = {1764--1773},
  year      = {2019},
  url       = {https://proceedings.mlr.press/v97/eichner19a.html}
}

@inproceedings{FlexibleFLParticipation,
  author    = {Ruan, Yichen and Zhang, Xiaoxi and Liang, Shu-Che and Joe-Wong, Carlee},
  title     = {Towards Flexible Device Participation in Federated Learning},
  booktitle = {Proceedings of the International Conference on Artificial Intelligence and Statistics (AISTATS)},
  volume    = {130},
  pages     = {3403--3411},
  year      = {2021},
  url       = {https://proceedings.mlr.press/v130/ruan21a.html}
}

@inproceedings{FedAvg,
  author    = {McMahan, Brendan and Moore, Eider and Ramage, Daniel and Hampson, Seth and Arcas, Blaise Ag{\"u}era y},
  title     = {Communication-Efficient Learning of Deep Networks from Decentralized Data},
  booktitle = {Proceedings of the International Conference on Artificial Intelligence and Statistics (AISTATS)},
  volume    = {54},
  pages     = {1273--1282},
  year      = {2017},
  url       = {http://proceedings.mlr.press/v54/mcmahan17a.html}
}

@inproceedings{FedOPT,
  author    = {Zhao, Yuchen and Jiang, Rui and Zhou, Hao and Chen, Shuo and Xu, Xin},
  title     = {FedOPT: Federated Optimization under Strongly Convex Loss},
  booktitle = {Proceedings of the IEEE Conference on Computer Communications (INFOCOM)},
  pages     = {2916--2924},
  year      = {2022},
  url       = {https://ieeexplore.ieee.org/document/9410239}
}

@article{Bagdasaryan2018,
  author  = {Bagdasaryan, Eugene and Veit, Andreas and Hua, Yiqing and Estrin, Deborah and Shmatikov, Vitaly},
  title   = {How to Backdoor Federated Learning},
  journal = {arXiv preprint arXiv:1807.00459},
  year    = {2018},
  url     = {https://arxiv.org/abs/1807.00459}
}

@phdthesis{Coppola2015,
  author  = {Coppola, Giovanni Francesco},
  title   = {Iterative Parameter Mixing for Distributed Large-Margin Training of Structured Predictors for Natural Language Processing},
  school  = {University of Adelaide},
  year    = {2015},
  url     = {https://digital.library.adelaide.edu.au/dspace/handle/2440/108087}
}

@article{Geyer2017,
  author  = {Geyer, Robin C. and Klein, Tobias and Nabi, Moin},
  title   = {Differentially Private Federated Learning: A Client Level Perspective},
  journal = {arXiv preprint arXiv:1712.07557},
  year    = {2017},
  url     = {https://arxiv.org/abs/1712.07557}
}

@inproceedings{Hitaj2017,
  author    = {Hitaj, Briland and Ateniese, Giuseppe and P{\'e}rez-Cruz, Fernando},
  title     = {Deep Models Under the {GAN}: Information Leakage from Collaborative Deep Learning},
  booktitle = {Proceedings of the ACM SIGSAC Conference on Computer and Communications Security},
  year      = {2017},
  url       = {https://dl.acm.org/doi/10.1145/3133956.3134012}
}

@inproceedings{Hong2018,
  author    = {Hong, Mingyi and Razaviyayn, Meisam and Lee, Jong-Shi Pang},
  title     = {Gradient Primal-Dual Algorithm Converges to Second-Order Stationary Solution for Nonconvex Distributed Optimization over Networks},
  booktitle = {Proceedings of the International Conference on Machine Learning (ICML)},
  volume    = {80},
  year      = {2018},
  url       = {https://proceedings.mlr.press/v80/hong18a.html}
}

@phdthesis{Jakovetic2013,
  author  = {Jakoveti{\'c}, Du{\v s}an},
  title   = {Distributed Optimization: Algorithms and Convergence Rates},
  school  = {Carnegie Mellon University},
  year    = {2013},
  url     = {https://repository.cmu.edu/dissertations/312/}
}

@article{Khaled2019,
  author  = {Khaled, Ahmed and Mishchenko, Konstantin and Richt{\'a}rik, Peter},
  title   = {First Analysis of Local {GD} on Heterogeneous Data},
  journal = {arXiv preprint arXiv:1909.04715},
  year    = {2019},
  url     = {https://arxiv.org/abs/1909.04715}
}

@article{Konecny2017FL,
  author  = {Kone{\v c}n{\'y}, Jakub and McMahan, H. Brendan and Yu, Felix X. and Richt{\'a}rik, Peter and Suresh, Ananda Theertha and Bacon, David},
  title   = {Federated Learning: Strategies for Improving Communication Efficiency},
  journal = {arXiv preprint arXiv:1610.05492},
  year    = {2017},
  url     = {https://arxiv.org/abs/1610.05492}
}

@article{Konecny2015Opt,
  author  = {Kone{\v c}n{\'y}, Jakub and McMahan, H. Brendan and Ramage, Daniel},
  title   = {Federated Optimization: Distributed Optimization Beyond the Datacenter},
  journal = {arXiv preprint arXiv:1511.03575},
  year    = {2015},
  url     = {https://arxiv.org/abs/1511.03575}
}

@article{LeCun1998,
  author  = {LeCun, Yann and Bottou, L{\'e}on and Bengio, Yoshua and Haffner, Patrick},
  title   = {Gradient-Based Learning Applied to Document Recognition},
  journal = {Proceedings of the IEEE},
  volume  = {86},
  number  = {11},
  pages   = {2278--2324},
  year    = {1998},
  url     = {https://ieeexplore.ieee.org/document/726791}
}

@inproceedings{Li2014a,
  author    = {Li, Mu and Andersen, David G. and Park, Jun Woo and Smola, Alexander J. and Ahmed, Amr and Josifovski, Vanja and Long, James and Shekita, Eugene J. and Su, Bor-Yiing},
  title     = {Scaling Distributed Machine Learning with the Parameter Server},
  booktitle = {Proceedings of the USENIX Symposium on Operating Systems Design and Implementation (OSDI)},
  pages     = {583--598},
  year      = {2014},
  url       = {https://www.usenix.org/conference/osdi14/technical-sessions/presentation/li_mu}
}

@article{Li2019,
  author  = {Li, Tian and Sahu, Anit Kumar and Talwalkar, Ameet and Smith, Virginia},
  title   = {Federated Learning: Challenges, Methods, and Future Directions},
  journal = {arXiv preprint arXiv:1908.07873},
  year    = {2019},
  url     = {https://arxiv.org/abs/1908.07873}
}

@article{Lin2017,
  author  = {Lin, Shu-Bin and Guo, Xing and Zhou, Ding-Xuan},
  title   = {Distributed Learning with Regularized Least Squares},
  journal = {Journal of Machine Learning Research},
  volume  = {18},
  number  = {1},
  pages   = {3202--3232},
  year    = {2017},
  url     = {https://jmlr.org/papers/v18/16-577.html}
}

@article{Reddi2016,
  author  = {Reddi, Sashank J. and Kone{\v c}n{\'y}, Jakub and Richt{\'a}rik, Peter and P{\'o}cz{\'o}s, Barnab{\'a}s and Smola, Alexander},
  title   = {{AIDE}: Fast and Communication Efficient Distributed Optimization},
  journal = {arXiv preprint arXiv:1608.06879},
  year    = {2016},
  url     = {https://arxiv.org/abs/1608.06879}
}

@article{Nguyen2022,
  author  = {Nguyen, Thanh Anh and Nguyen, Tien Duc and Le, Lam Thanh and Dinh, Cuong Thanh},
  title   = {On the Generalization of Wasserstein Robust Federated Learning},
  journal = {arXiv preprint arXiv:2206.01432},
  year    = {2022},
  url     = {https://arxiv.org/abs/2206.01432}
}

@article{Yu2023,
  author  = {Yu, Yuyang and Karimireddy, Sai Praneeth and Ma, Yi and Jordan, Michael I.},
  title   = {{Scaff-PD}: Communication Efficient Fair and Robust Federated Learning},
  journal = {arXiv preprint arXiv:2307.13381},
  year    = {2023},
  url     = {https://arxiv.org/abs/2307.13381}
}

@inproceedings{Deng2020,
  author    = {Deng, Yucheng and Kamani, Mohammad Mahdi},
  title     = {Distributionally Robust Federated Averaging},
  booktitle = {Advances in Neural Information Processing Systems (NeurIPS)},
  year      = {2020},
  url       = {https://proceedings.neurips.cc/paper/2020/file/ac450d10e166657ec8f93a1b65ca1b14-Paper.pdf}
}

@article{Hong2021,
  author  = {Hong, Jiawei and Wang, Hui and Wang, Zhen and Zhou, Jie},
  title   = {Federated Robustness Propagation: Sharing Adversarial Robustness in Federated Learning},
  journal = {arXiv preprint arXiv:2106.11264},
  year    = {2021},
  url     = {https://arxiv.org/abs/2106.11264}
}

@article{Shi2023,
  author  = {Shi, Shuo and Guo, Yuchen and Wang, Di and Zhu, Yan},
  title   = {Distributionally Robust Federated Learning for Network Traffic Classification with Noisy Labels},
  journal = {IEEE Transactions on Network and Service Management},
  year    = {2023},
  url     = {https://ieeexplore.ieee.org/document/10265143}
}

\end{document}